\def \Kh {\widehat {K}}
\def \x {\mathbf{x}}
\def \H {\mathcal{H}_{\kappa}}
\def \R {\mathbb{R}}
\def \w {\mathbf{w}}
\def \a {\mathbf{a}}
\def \E {\mathrm{E}}
\def \C {\mathcal{C}}
\def \y {\mathbf{y}}
\def \Hki {\mathcal{H}_{\kappa_i}}
\def \Hkj {\mathcal{H}_{\kappa_j}}
\def \fh {\widehat{\f}}
\def \S {\mathcal{S}}
\def \L {\mathcal{L}}
\def \span {\mbox{span}}
\def \D {\mathcal{D}}
\def \E {\mathcal{E}}
\def \X {\mathcal{X}}
\def \H {\mathcal{H}}
\def \fh {\hat{f}}
\def \K {\mathcal{K}}
\def \h {\mathbf{h}}
\def \ld {\ell_2({\D})}
\begin{document}

\title[Sparse MKL with Geometric Convergence]{Sparse Multiple Kernel Learning with \\ Geometric Convergence Rate}
 \coltauthor{\Name{Rong Jin} \Email{rongjin@cse.msu.edu}\\
 \addr Department of Computer Science and Engineering\\
       Michigan State University\\
       East Lansing, MI, 48824, USA\\
 \Name{Tianbao Yang} \Email{tyang@ge.com}\\
 \addr Machine Learning Lab \\
 GE Global Research\\
 San Ramon, CA 94583,, USA\\
\Name{Mehrdad Mahdavi} \Email{mahdavim@cse.msu.edu}\\
       \addr Department of Computer Science and Engineering\\
       Michigan State University\\
       East Lansing, MI, 48824, USA
 }


\maketitle

\begin{abstract}
In this paper, we study the problem of  sparse multiple kernel learning (MKL), where the goal is to efficiently learn a combination of a fixed small number of kernels from a large pool that could lead to a kernel classifier with a small prediction error. We develop an efficient algorithm based on the greedy coordinate descent algorithm, that is able to achieve a geometric convergence rate under appropriate conditions. The convergence rate is achieved by measuring the size of functional gradients by an empirical $\ell_2$ norm that depends on the empirical data distribution. This is in contrast to previous algorithms that use a functional norm to measure the size of gradients, which is independent from the data samples. We also establish a  generalization error bound of the learned  sparse kernel classifier using the technique of local Rademacher complexity.

\end{abstract}
\begin{keywords}
kernel methods, multiple kernel learning, greedy coordinate descent, generalization bound
\end{keywords}

\section{Introduction}
Kernel methods have been studied extensively, thanks to their empirical success in a variety of applications. Examples of kernel methods include support vector machines (SVMs), kernel ridge regression, kernel clustering, kernel PCA, and  many others. It is well known that the choice of kernel function can be crucial to the success of kernel methods. Although, in principle kernel can be chosen by standard model selection methods such as cross validation, the high computational cost makes it  unattractive. Over the past decade, significant progress has been made to efficiently learn an appropriate kernel for a given task.


Among the many approaches developed for kernel learning, recent studies have been  focused predominately on multiple kernel learning (MKL) algorithms. Given a collection of kernels, the objective of MKL is to learn a combination of multiple kernel classifiers, one for each kernel function, from the training examples that results in small prediction error. Many computational algorithms have been developed for multiple kernel learning~\citep{Lanckriet:2004:LKM,argyriou:2005:learning,Bach:2008:mkl,Argyriou:2006:DC,Lewis:2006:NKC,Micchelli:2005:mkl,Ong:2005:mkl,Bach:2004:MKL,Rakotomamonjy:2008:mkl,Sonnenburg:2006:LSM,xu:2008:level, citeulike:9417296}. The analysis of generalization error bound for MKL has been developed in several studies~\citep{hussain:2011:note,Ying:2007:mkl,Cortes:2009:LRL:1795114.1795128,Cortes:2010:mkl,Bousquet:2003:mkl,Srebro:2006:mkl,Ying:2009:mkl}, aiming to bound the additional error arising from optimizing the combination of multiple kernels. These studies have shown that MKL can be effective even when the number of kernels to be combined is very large. For instance, the generalization error bound from learning a combination of $m$ different kernels, will only deteriorate by a factor of $\log m$ when the sum of kernel combination weights is bounded.

Despite the encouraging results, one problem with MKL is that the resulting classifier can be a combination of many kernel classifiers, leading to a high computational cost in testing.  We address this challenge by developing efficient algorithms and theories for  sparse multiple kernel learning. The objective of  sparse MKL is to learn a sparse combination of multiple kernel classifiers involving no more than $d$ kernels, where $d \ll m$ is a predefined constant.

We develop a simple algorithm for learning such a sparse combination of multiple kernel classifiers, and present the analysis  bounding the generalization performance of the learned kernel classifier. Our algorithm is an iterative algorithm  based on the  greedy coordinate descent algorithm~\citep{shai-2010-trade,nesterov-2007,tseng-2011-bcd}.  To generate a sparse MKL solution involving no more than $d$ kernels, at each iteration, our algorithm adds to the existing pool the kernel with the largest gradient. The size of gradients is measured by an empirical $\ell_2$ norm that depends on the training examples.  Under appropriate condition, the proposed approach is able to achieve a geometric convergence rate. 
To the best of our knowledge, this is the first algorithm for sparse MKL that achieves a geometric convergence rate.

Although several algorithms have been developed for sparse MKL by exploring different forms of regularization~\citep{Vishwanathan-2010-mkl,kloft-2009-efficient,ICML2011:Orabona:mkl}, none of them are able to establish the generalization error bound for a MKL solution involved a fixed number (i.e., $d$) of kernels. We also note that our work differs from the studies on the sparsity of MKL ~\citep{Koltchinskii:2008:sparsity,Koltchinskii:2010:mkl} which focus on bounding the sparsity of combination weights for kernels and do not address our problem directly. 

The most related work to this study is~\citep{omp-mkl-nips-2011}, where a group orthogonal matching pursuit (GOMP) algorithm is applied to learn a sparse combination of kernel classifiers with exactly  $d$ kernels. Unlike previous formulations for sparse MKL that use $\ell_1$ regularization (i.e. $\sum_{j}\|f_j\|_{\Hkj}$), the authors propose to use $\ell^2_2$ regularization (i.e. $\sum_{j}\|f_j\|^2_{\Hkj}$) together with a sparsity constraint (i.e. $\ell_0$ constraint) for sparse MKL.  Although they did not present a convergence analysis for the proposed algorithm except for a sparse recovery analysis, we can apply the analysis in~\citep{shai-2010-trade} for smooth functions to their algorithm to obtain a $O(1/d)$ convergence rate. The group orthogonal matching pursuit algorithm is similar to the greedy coordinate descent algorithm used in this study except that we measure the size of gradients by an empirical $\ell_2$ norm while it is measured by a functional norm in ~\citep{omp-mkl-nips-2011}. It is this difference that leads to a geometric convergence rate for the proposed algorithm which is a significant improvement over the rate of $O(1/d)$.

\textbf{Outline of  contributions.} The following contributions are made in this paper: \begin{itemize}
\item We present a baseline algorithm, based on the greedy coordinate descent method, that achieves $O(1/d)$ convergence rate when using $\ell_1$ norm functional regularizer.
\item We introduce an empirical $\ell_2$ norm to measure the size of functional gradients in the application of greedy coordinate descent algorithm to sparse MKL, and achieve a geometric convergence rate under appropriate conditions.
\item We study the generalization performance of the proposed algorithm. Specifically, we derive an upper bound on the generalization performance of learned classifier using local Rademacher technique  that has a additive term of $O\left(d\sqrt{\ln m/N}\right)$, which matches the existing bounds in their dependence on $m$ (i.e., the number of kernel functions) and $N$ (i.e., the number of training samples).
\end{itemize}

Our paper is organized as follows. In the next section we formally introduce the problem of  sparse MKL. In section  \ref{sec:warmup} we present our baseline algorithm with its convergence analysis. Section \ref{sec:alg} introduces the main algorithm proposed in this paper with analysis of its convergence rate and generalization bound. We wrap up in Section \ref{sec:conc} with a discussion of possible directions for the future work.

\section{Problem Setting: Sparse Multiple Kernel Learning (MKL)}

Let $\D = \{(\x_i, y_i), i=1, \ldots, N\}$ be a collection of training examples, where $\x_i \in \X$ and $y_i \in \{-1, +1\}$, and let $\{\kappa_j(\cdot, \cdot): \X\times\X \mapsto \R, j \in [m]\}$ be a collection of reproducing kernels to be combined, where $[m]$ denotes the set $\{1,\cdots, m\}$. Let $\{\H_j, j\in[m]\}$ be the associated  Reproducing Kernel Hilbert Spaces (RKHS). We denote by $\y = (y_1, \ldots, y_N)^{\top}$ the outputs for all the instances in $\D$. For the convenience of analysis, we assume $\kappa_j(\x, \x) \leq 1$ for any $\x \in \X$ and any $j \in [m]$. The goal of MKL is to learn a function $f = \sum_{j=1}^m f_j$, where $f_j \in \H_j, j \in [m]$, that has a small generalization error. A common approach for MKL is to learn the combination of kernel classifiers by solving the following optimization problem~\citep{Micchelli:2005:mkl}
\begin{eqnarray}
    \min\limits_{f \in \H} \quad \L(f) = \frac{1}{N}\sum_{i=1}^N \ell(f(\x_i), y_i) + \lambda \sum_{j=1}^m \|f_j\|_{\H_j},\label{eqn:1}
\end{eqnarray}
where $\H = \{f = \sum_{j=1}^m f_j: f_j \in \H_j\}$, and $\ell(z, y) = (z-y)^2/2$ is a square loss~\footnote{Although we restrict our discussion to square loss, it is straightforward to extend our result to the quadratic-type loss function defined in~\citep{Koltchinskii:2010:mkl}}. In this study, we assume that the number of kernels $m$ is very large (could be larger than the number of training examples $N$), and our objective is to learn a combination of kernel classifiers involving no more than $d$ kernels, where $d \ll m$ is a predefined constant. For the convenience of discussion, we define by $\E_N(f) = \frac{1}{N}\sum_{i=1}^N \ell(f(\x_i), y_i)$ the empirical loss for kernel classifier $f$, by $\|f\| = \sum_{j=1}^m\|f_j\|_{\H_j}$ the norm of a combined kernel classifier $f$, and by $J(f) = \{j \in [m]: f_j \neq 0\}$ the subset of non-zero kernel classifiers used to construct $f$. Finally, we define $f^*$ the optimal solution to (\ref{eqn:1}), i.e.,
\begin{eqnarray}
    f^* = \mathop{\arg\min}\limits_{f \in \H} \L(f).
\end{eqnarray}

Note that according to~\citep{Micchelli:2005:mkl}, the problem in (\ref{eqn:1}) is equivalent to the following optimization problem
\begin{eqnarray}
    \min\limits_{\gamma \in \R_+^m, \gamma^{\top}\mathbf{1} \leq 1} \min\limits_{f \in \H_{\gamma}} \frac{\lambda'}{2}
    \|f\|_{\H_{\gamma}}^2 + \frac{1}{N}\sum_{i=1}^N (f(\x_i) - y_i)^2, \label{eqn:mkl}
\end{eqnarray}
where $\lambda' > 0$ is an appropriately chosen parameter depending on $\lambda$ in (\ref{eqn:1}), and $\H_{\gamma}$ is a RKHS endowed with a combined kernel function $\kappa(\cdot, \cdot;\gamma) = \sum_{i=1}^m \gamma_i \kappa_j(\cdot, \cdot)$. It is not difficult to show that $\gamma_j$ computed in (\ref{eqn:mkl}) is proportional to $\|f_j\|_{\H_j}$ computed from (\ref{eqn:1}). As a result, choosing the kernel classifiers with the largest functional norm in (\ref{eqn:1}) is equivalent to choosing the kernels with the largest weights $\gamma_j$ in (\ref{eqn:mkl}).


\section{Warmup: A Greedy Coordinate Descent Algorithm  for  Sparse MKL}
\label{sec:warmup}

A straightforward approach for  sparse MKL is a two-stage scheme: it first learns a combination of all $m$ kernels by solving the problem in (\ref{eqn:1}) and then only keeps the $d$ most ``important'' kernel classifiers $f_j$ in the combination. To select the most important kernel classifiers, a simple approach is to choose the kernel classifiers with the largest functional norm $\|f_i\|_{\Hki}$,  because $\|f_i\|_{\Hki}$ is proportional to the combination weight $\gamma_i$ in~(\ref{eqn:mkl}). It is however easy to construct a counter example to show that the two-stage scheme fails to find the best kernel. In particular, we will show that for two cases that have the same sets of unique kernels, the two-stage scheme chooses different kernels. In the first case, we have two kernel functions $\kappa_1(\cdot, \cdot)$ and $\kappa_2(\cdot, \cdot)$. Using multiple kernel learning, we can learn the weights for both kernels. Let the learned weights be $0.8$ for $\kappa_1(\cdot, \cdot)$ and $0.2$ for $\kappa_2(\cdot, \cdot)$. According to the two-stage approach, we will select kernel $\kappa_1(\cdot, \cdot)$. In the second case, we have $10$ identical copies of $\kappa_1(\cdot, \cdot)$ and one copy of $\kappa_2$. Since both cases share the same set of unique kernels, we expect the same kernel to be selected by the two-stage approach. However, based on the symmetric argument, it is straightforward to show that the weight for $\kappa_2(\cdot, \cdot)$ remains unchanged while the weights for the copies of $\kappa_1(\cdot, \cdot)$ are reduced to $0.08$. As a result, the two-stage approach selects kernel $\kappa_2(\cdot, \cdot)$ for the second case, a different kernel from the first case. Another problem with this two-stage approach is its high computational complexity since it requires solving an optimization problem involved all kernel functions, even including the ones that are totally irrelevant to the target prediction task.

As the first step, we present a baseline algorithm that extends the greedy coordinate descent algorithm~\citep{shai-2010-trade} to solve the $\ell_1$ regularized MKL in~(\ref{eqn:1}) and achieves a $O(1/d)$ convergence rate. The basic steps are shown in Algorithm~\ref{alg:1}. At each iteration $k$, Algorithm~\ref{alg:1} selects the kernel with the largest gradient measured by its functional norm, denoted by $j_k$, and expands the set of selected kernels $\S_k$ to $\S_{k+1}$ by including $j_k$. It then searches for the optimal combination of kernels in the set $\S_{k+1}$ that minimizes the objective function $\L(f)$.  Note that although the objective in~(\ref{eqn:1}) is non-smooth due to the non-smooth regularization term $\sum_{j=1}^m \|f_j\|_{\H_j}$, we are still able to  obtain a $O(1/d)$ convergence rate as shown in Theorem~\ref{lemma:1}. The magic lies in step 4, where instead of choosing the coordinate with the largest gradient with respect to the objective function $\L(f)$, we choose the coordinate with the largest gradient with respect to $\E_N(f)$, the smooth part in the objective function, i.e.
\[
    \left\|\nabla_j \E_N(f)\right\|_{\Hkj} = \left\|\frac{1}{N}\sum_{i=1}^N \ell'(f(\x_i), y_i) \kappa_j(\x_i, \cdot)\right\|_{\Hkj}.
\]
On the other hand, in step 7, we update the multiple kernel classifier by solving the $\ell_1$ regularized MKL. It is this special design that makes it possible to achieve $O(1/d)$ convergence rate even for the non-smooth objective function in (\ref{eqn:1}). We finally note that Algorithm~\ref{alg:1} is similar in spirit to the GOMP based approach~\citep{omp-mkl-nips-2011} and share the same convergence rate. The main difference is that we directly solve the $\ell_1$ regularized MKL in (\ref{eqn:1}) while in~\citep{omp-mkl-nips-2011}, a $\ell_2^2$ regularizer is used and the sparsity is enforced through a constraint based on the $\ell_0$ norm.

\begin{algorithm}[t]
\center \caption{A Greedy Coordinate Descent Approach for Sparse MKL with $\ell_1$ Regularization}
\begin{algorithmic}[1] \label{alg:1}
    \STATE {\bf Input}: $\lambda > 0$: regularization parameter,   $d$: the number of selected kernels

    \STATE {\bf Initialization}: $f_j^{0} = 0, j \in [m]$ and $\S_{0} = \emptyset$.
    \FOR{$k = 1, \ldots, d$}
        \STATE $j_k = \mathop{\arg\max}_{j \in [m]} \left\|\nabla_j \E_N(f^{k-1})\right\|_{\H_j}$
        \STATE Exist the loop if $\left\|\nabla_{j_k} \E_N(f^{k-1})\right\|_{\H_{j_k}} \leq \lambda$
        \STATE $\S_{k} = \S_{k-1} \cup \{j_k\}$
        \STATE Update the kernel classifier by solving the following optimization problem
        \begin{eqnarray}
            f^{k} = \mathop{\arg\min}\limits_{J(f) = \S_{k}} \L(\w) = \lambda \|f\| + \E_N(f) \label{eqn:f-update}
        \end{eqnarray}
    \ENDFOR
    \STATE {\bf Output} $f=f^{k-1}$
\end{algorithmic}
\end{algorithm}

The following theorem shows the performance guarantee of the solution obtained by Algorithm~\ref{alg:1} where its proof is given in Appendix A.

\begin{theorem} \label{lemma:1}
Let $f$ be the solution output from Algorithm~\ref{alg:1}. If $f$ is obtained by exiting from the middle of the loop, we have $\L(f) = \L(f^*)$. Otherwise, we have
\[
    \E_N(f) + \lambda \|f\| \leq \E_N(f^*) + \lambda\|f^*\| + \frac{2}{d-1}\|f^*\|^2.
\]
\end{theorem}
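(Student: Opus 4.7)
The proof naturally splits into the two cases of the theorem statement. If the algorithm exits via Step 5 at iteration $k$, then $\|\nabla_{j_k}\E_N(f^{k-1})\|_{\H_{j_k}}\le\lambda$, and since $j_k$ is the argmax of the gradient norms over $[m]$, the same inequality holds for every $j\notin\S_{k-1}$. Inside $\S_{k-1}$, the first-order optimality of $f^{k-1}$ on its support (Step 7 at the previous iteration) gives the matching subgradient inequality, with equality in the sharp form at active coordinates. Together these are precisely the subgradient optimality conditions for the unconstrained problem (\ref{eqn:1}), so $\L(f^{k-1})=\L(f^*)$.

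For the remaining case, suppose the loop runs to completion, so $G_k:=\|\nabla_{j_k}\E_N(f^{k-1})\|_{\H_{j_k}}>\lambda$ throughout. The plan is to set up the recursion $\Delta_{k-1}-\Delta_k\ge \Delta_{k-1}^2/(2\|f^*\|^2)$ with $\Delta_k:=\L(f^k)-\L(f^*)$, and then telescope. For the per-step progress, I consider the trial perturbation $\beta=-(\eta/G_k)\nabla_{j_k}\E_N(f^{k-1})\in\H_{j_k}$; the smoothness of the square loss together with $\kappa_{j_k}(\x,\x)\le 1$ gives $\E_N(f^{k-1}+\beta)\le\E_N(f^{k-1})-\eta G_k+\eta^2/2$, and the triangle inequality for $\|\cdot\|$ yields $\lambda\|f^{k-1}+\beta\|\le\lambda\|f^{k-1}\|+\lambda\eta$, so that
\[
\L(f^{k-1}+\beta)\le \L(f^{k-1})-\eta(G_k-\lambda)+\eta^2/2 .
\]
Choosing $\eta=G_k-\lambda$ and invoking that $f^k$ minimises $\L$ over classifiers supported on $\S_k\supseteq\{j_k\}$ then delivers $\L(f^{k-1})-\L(f^k)\ge (G_k-\lambda)^2/2$.

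The critical step is lower bounding $G_k-\lambda$ by $\Delta_{k-1}/\|f^*\|$. Convexity of $\L$ via a subgradient inequality gives $\Delta_{k-1}\le\langle\nabla\E_N(f^{k-1}),f^{k-1}-f^*\rangle+\lambda(\|f^{k-1}\|-\|f^*\|)$. The clean identity $\langle\nabla\E_N(f^{k-1}),f^{k-1}\rangle=-\lambda\|f^{k-1}\|$ follows because the optimality of $f^{k-1}$ over $\S_{k-1}$ forces $\nabla_j\E_N(f^{k-1})=-\lambda\,f^{k-1}_j/\|f^{k-1}_j\|_{\H_j}$ at every active coordinate $j\in\S_{k-1}$ with $f^{k-1}_j\neq 0$, while inactive coordinates contribute $0$ to the inner product. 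Meanwhile $-\langle\nabla\E_N(f^{k-1}),f^*\rangle\le G_k\|f^*\|$ by Cauchy--Schwarz on each $\H_j$ and the maximality of $G_k$. Combining, $\Delta_{k-1}\le(G_k-\lambda)\|f^*\|$; substituting into the progress bound yields the target recursion, and the standard argument $1/\Delta_k-1/\Delta_{k-1}\ge 1/(2\|f^*\|^2)$ telescopes to $\Delta_{d-1}\le 2\|f^*\|^2/(d-1)$, which matches the theorem since the algorithm outputs $f^{d-1}$. The main obstacle I anticipate is precisely the subgradient bookkeeping in this clean identity: $\S_{k-1}$ may include previously selected but currently inactive coordinates whose subdifferential is the whole unit ball, and one must check that their contribution vanishes cleanly so that $\lambda\|f^{k-1}\|$ cancels and the final gap depends on $\|f^*\|$ alone rather than $\|f^{k-1}\|+\|f^*\|$ — without this cancellation the recursion does not close to a $O(1/d)$ rate.
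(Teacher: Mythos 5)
Your proof is correct and follows essentially the same route as the paper's: a per-iteration progress bound of $(G_k-\lambda)^2/2$ from a line search along the selected coordinate's normalized gradient (using smoothness of the square loss and $\kappa_j(\x,\x)\le 1$), a gap bound $\L(f^{k-1})-\L(f^*)\le (G_k-\lambda)\|f^*\|$ from convexity plus optimality of $f^{k-1}$ on its support, and the standard $1/\Delta_k$ telescoping; your stationarity identity $\langle\nabla\E_N(f^{k-1}),f^{k-1}\rangle=-\lambda\|f^{k-1}\|$ is just an equivalent packaging of the paper's explicit subgradient choice $\delta_j$, and your handling of the exit-in-the-middle case via the full subgradient optimality conditions matches the paper's conclusion $\L(f)=\L(f^*)$.
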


It should be emphasized that although the analysis in~\citep{shai-2010-trade} shows that the greedy coordinate descent 
approaches enjoy a geometric convergence rate when the objective function is both strongly convex and smooth in its variables, 
it can not be applied to our problem directly. This is because although the loss function $\ell(z, y)$ used in the regression 
is both strongly convex and smooth in the argument $y$, it is not strongly convex in $\{f_j\}_{j=1}^m$ because the prediction 
is given by $\sum_{j=1}^m f_j(\x)$. In next section, we present another approach for sparse MKL, based on greedy coordinate 
descent, that is able to achieve a geometric convergence rate under appropriate conditions.

\section{A Geometrically Convergent Algorithm for   Sparse MKL}
\label{sec:alg}
In this section, we present an algorithm for   sparse MKL that can achieve a geometric convergence rate under appropriate conditions.

We first argue that selecting kernel classifiers based on their functional norm may not necessarily be the best idea. This is because in order to ensure a removed kernel classifier $f_j$ to have a small impact on the overall regression error, we should be mostly concerned with $\mathrm{E}[|f_j(\x)|^2]$, instead of $\|f_j\|_{\H_j}$. To see this, we bound $\mathrm{E}[|f(\x) - y|^2] - \mathrm{E}[|f(\x) - f_j(\x) - y|^2]$, which measures the impact of removing $f_j$ from $f$
\begin{eqnarray*}
\mathrm{E}[|f(\x) - f_j(\x) - y|^2] - \mathrm{E}[|f(\x) - y|^2] & = & \mathrm{E}[|f_j(\x)|^2] - 2\mathrm{E}[f_j(\x)(f(\x) - y)] \\
& \leq & \mathrm{E}[|f_j(\x)|^2] + 2\sqrt{\mathrm{E}[|f_j(\x)|^2]}\sqrt{\mathrm{E}[|f(\x) - y|^2]}.
\end{eqnarray*}
Although $\|f_j\|_{\H_j} \geq \|f_j\|_{\infty} \geq \sqrt{\mathrm{E}[|f_j(\x)|^2]}$, there could be a significant gap 
between $\|f_j\|_{\H_j}$ and $\sqrt{\mathrm{E}[|f_j(\x)|^2]}$~\citep{smale-2007-learning}, making it possible for the functional norm based criterion to \textit{remove} the kernels that are \textit{important} in the final prediction. 

Based on the above discussion, we propose to measure the size of kernel classifiers $f_j$ by its $\ell_2$ norm, i.e., $\sqrt{\mathrm E|f_j(\x)|^2}$. Since the distribution of $\x$ is unavailable,  we introduce the empirical counterpart of $\sqrt{\mathrm E|f_j(\x)|^2}$, called empirical  $\ell_2$ norm and denoted by $\|f_j\|_{\ld}$. Given $f_j= \sum_{i=1}^N\alpha_{ji}\kappa_j(\x_i,\cdot)$, its $\ld$ norm is computed as
\begin{equation}
\|f_j\|_{\ld} = \sqrt{\frac{1}{N}\sum_{a=1}^N f^2_j(\x_a)} = \sqrt{\frac{1}{N}\sum_{a=1}^N \left(\sum_{b=1}^N \alpha_{jb}\kappa_j(\x_b, \x_a) \right)^2} = \frac{1}{\sqrt{N}}\|K_j\alpha_j\|_2,\label{eqn:l2}
\end{equation}
where $K_j = [\kappa_j(\x_a, \x_b)]_{N\times N}$ is the kernel matrix for $\kappa_j(\cdot, \cdot)$, and $\alpha_j=(\alpha_{j1},\cdots, \alpha_{jN})^{\top}$. For the purpose of our analysis, we also define an empirical $\ell_2$ norm for the combined classifier $f= \sum_{j=1}^mf_j=\sum_{j=1}^m\sum_{i=1}^N\alpha_{ji}\kappa_j(\x_i,\cdot)$ as
\begin{equation}
\|f\|_{\ld} = \sqrt{\frac{1}{N}\sum_{i=1}^Nf^2(\x_i)} = \frac{1}{\sqrt{N}}\left\|\sum_{j\in[m]}K_j\alpha_j\right\|_2.
\end{equation}

One way to exploit the empirical $\ell_2$ norm for sparse MKL is to incorporate it into (\ref{eqn:1}) as part of the regularization, leading to a mixture regularizer that is consisted of both $\|f_j\|_{\H_j}$ and $\|f_j\|_{\ld}$. A similar formulation is suggested in~\citep{Koltchinskii:2010:mkl}. It is however unclear as how to efficiently solve the related optimization problem to achieve a convergence rate better than $O(1/d)$. Instead, we will use the empirical $\ld$ norm to measure the size of gradients when performing greedy coordinate descent optimization. Our analysis in subsection~\ref{sec:conv} shows that this modification to Algorithm~\ref{alg:1}, together with other changes, will result in a geometric convergence rate under appropriate conditions, i.e.
\begin{equation*}
\E_N(f) - \min_{f\in \H} \E_N(f) \leq O(\max(0, (1-\tau)^d)),
\end{equation*}
where the value of $\tau$ will be  determined by analysis.

\begin{algorithm}[t]
\center \caption{A $\ld$ Norm based Greedy Coordinate Descent Approach for Sparse MKL}
\begin{algorithmic}[1] \label{alg:2}
    \STATE {\bf Input}: $\lambda > 0$: regularization parameter,  $d$: the number of selected kernels
    \STATE {\bf Initialization}: $f_j^{0} = 0, j \in [m]$ and $\S_{0} = \emptyset$.
    \FOR{$k = 1, \ldots, d$}
        \STATE $j_k = \mathop{\arg\max}_{j \in [m]} \left
        \|\nabla_j \E_N(f^{k-1})\right\|_{\ld}$
        \STATE Update the kernel classifier as
        \begin{eqnarray}
            f^{k} = f^{k-1} - f_{j_k}, \text{ where } f_{j_k}= \frac{1}{N}\sum_{i=1}^N a^k_i \kappa_{j_k}(\x_i, \cdot) \ \label{eqn:f-update-1}
        \end{eqnarray}
        where $\a^k = \left(\begin{array}{c}a^k_1\\ \ldots\\a^k_N\end{array}\right)$ is the projection of $\boldsymbol{\ell}'(f^{k-1})=\left(\begin{array}{c}\ell'(f^{k-1}\left(\x_1), y_1\right)\\ \ldots\\ \ell'\left(f^{k-1}(\x_N), y_N\right)\end{array}\right)$ into the space spanned by the column vectors of the kernel matrix $K_{j_k} = [\kappa_{j_k}(\x_a, \x_b)]_{N\times N}$.
    \ENDFOR
    \STATE {\bf Output} $f=f^d$
\end{algorithmic}
\end{algorithm}

Algorithm~\ref{alg:2} gives the basic steps of the new approach for sparse MKL. Similar to Algorithm~\ref{alg:1}, at each iteration, Algorithm~\ref{alg:2} chooses the kernel with the largest gradient and updates the kernel classifier based on the gradient with respect to the selected kernel. The key difference between these two algorithms is how to measure the size of the gradients. In Algorithm~\ref{alg:1}, the size of gradient $\nabla_j \E_N(f^{k})$ is measured by its functional norm, while Algorithm~\ref{alg:2} measures the size of gradient by $\ld$ norm of $\nabla_j \E_N(f^{k})$. In addition, Algorithm~\ref{alg:2} follows the idea of gradient descent for updating the kernel classifier $f^k$ and does not require solving any optimization problem. However, unlike the standard gradient descent algorithm that updates the classifier directly using the gradient, Algorithm~\ref{alg:2} projects the coefficients of $\nabla_{j_k}\E_N(f^{k-1})$ into the subspace spanned by the column vectors in $K_j$ before using it for updating. This step is critical for the correctness of the algorithm.

\subsection{Convergence Analysis}\label{sec:conv}
To analyze the performance of Algorithm~\ref{alg:2}, we assume there exists a sparse MKL solution that achieves a small regression error. More specifically, we slightly abuse our notation by redefining $f^*$ as the optimal kernel classifier that minimizes the empirical loss $\E_N(f)$,  $\fh$ as the optimal kernel classifier that minimizes the empirical loss using no more than $d$ kernels, and $\varepsilon^*$ be the difference in the empirical loss between $\fh$ and $f^*$, i.e.,
\begin{eqnarray}\label{eqn:def}
    f^* &=&\mathop{\arg\min}_{f\in \H} \E_N(f),\quad  \fh = \mathop{\arg\min}\limits_{f\in \H,\; |J(f)| \leq d} \E_N(f) \label{eqn:fd}, \quad \varepsilon^* = \E_N(\fh) - \E_N(f^*).
\end{eqnarray}
 We assume $\varepsilon^*$ is small, implying that the optimal solution $f^*$ can be well approximated by a function involved no more than $d$ kernels.

In order to state our result, we need to characterize the relationship among different kernel matrices. In~\citep{Koltchinskii:2011:oracle}, the author defines quantity $\beta(b, J, H)$ to capture the geometric relationship for a set of vectors $H = (\h_1, \ldots, \h_m) \in \R^{N\times m}$, i.e.,
\[
    \beta(b, J, H) = \inf\left\{\beta > 0: \sum_{j \in J} \lambda^2_j \leq \beta^2 \left\|\sum_{j=1}^m \lambda_j \h_j \right\|^2_2, \forall \lambda \in \C(b, J) \right\},
\]
where $b \geq 0$ is a nonnegative constant, $J \subset [m]$, and $\C(b, J)$ is defined as
\[
    \C(b, J) = \left\{ \lambda \in R^m: \sum_{j \notin J} \lambda^2_j \leq b^2\sum_{j \in J} \lambda^2_j \right\}.
\]
$\C(b, J)$ defines a set of sparse vector in which the components in $J$ dominates over the other components measured by their absolute values. When $b = 0$, vectors in $\C(b, J)$ only have non-zero elements in set $J$, leading to the standard definition of sparse vectors. $\beta(b, J, H)$ essentially captures the linearly dependence among vectors in $H$. For instance, when all $\h_j$ are normalized and orthogonal to each other, we have $\beta(0, J, H) = 1$. We extend $\beta(b, J, H)$ to $\beta(d, H)$ by taking into account all the vectors with no more than $d$ non-zero elements,
\[
    \beta(d, H) = \inf\{\beta(0, J, H): J \subset [m], |J| \leq d \}.
\]

We now generalize the above definitions to capture the ``dependence'' among the kernel matrices $\K = \{\Kh_1, \ldots, \Kh_m\}$, where $\Kh_j=K_j/N$. Since we need to deal with a sparse matrix $A = (\a_1, \ldots, \a_m) \in \R^{N\times m}$, we extend the definition of $\C(b, J)$ to $\S(b, J, \K)$ for sparse matrix as follows
\begin{eqnarray}
&&\S(b, J, \K) =\label{eqn:S}\\
&& \left\{ A = (\a_1, \ldots, \a_m) \in \R^{N\times m}: \sum_{j \notin J} \|\a_j\|_2 \leq b\sum_{j \in J} \|\a_j
\|_2, \a_j \in \span(K_j), j=1, \ldots, m \right\}, \nonumber
\end{eqnarray}
where $\span(K_j)$ stands for the subspace spanned by the column vectors of $K_j$. We then define quantity $\gamma(b, J, \K)$ to capture the ``dependence'' among matrices in $\K$
\begin{eqnarray}
    \gamma(b, J, \K) = \inf\left\{\gamma > 0: \sum_{j \in J} \|\a_j\|_2 \leq \gamma\left\|\sum_{j=1}^m \Kh_j\a_j \right\|_2, \forall A \in \S(b, J, \K) \right\}. \label{eqn:gamma-1}
\end{eqnarray}
We finally define $\gamma(d, \K)$ to take into account any matrix $A$ that has no more than $d$ non-zero column vectors
\begin{eqnarray}
    \gamma(d, \K) = \inf\left\{ \gamma(0, J, \K): J \subset [m], |J| \leq d \right\}. \label{eqn:gamma-2}
\end{eqnarray}
We note that the value of $\gamma(d,\K)$ is closely related to the correlation between the subspace spanned by any two matrices in $\K$.  For example, when subspaces spanned by each matrix $\Kh_j$ are orthogonal to each other and let the minimum non-zero eigenvalues of $\Kh_j, j\in[m]$ be larger than  $\sigma^+_{\min}\leq 1$, we have $\gamma(d,\K) \leq \sqrt{d}/\sigma^+_{\min}$.  More generally, if we let  $\delta(\K)$ denote the correlation between the subspace spanned by any two matrices in $\K$, defined as 

\[
    \delta(\K) = \max\limits_{1 \leq i < j \leq d} \max\limits_{\a_i, \a_j}\frac{|(\Kh_i \a_i)^{\top}(\Kh_j \a_j)|}{\|\Kh_i\a_i\|_2\|\Kh_j\a_j\|_2}.
\]
The following proposition shows the relationship between $\gamma(d,\K)$ and $\delta(\K)$ when $\delta(\K)$ is small.
\begin{proposition}
\label{prop:3}
If $\displaystyle \delta(\K)< \frac{1}{d-1}$,  the following inequality holds  for $\gamma(d,\K)$ and $\delta(\K)$,
\[
    \gamma(d, \K) \leq \frac{\sqrt{d} }{\sqrt{1 - (d-1)\delta(\K)}\sigma^+_{\min}},
\]
where $\sigma^+_{\min}$ is a lower bound of the minimum non-zero eigenvalues of $\Kh_j, j\in[m]$.
\end{proposition}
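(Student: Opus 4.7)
The plan is to unpack the definitions in (\ref{eqn:S})--(\ref{eqn:gamma-2}) and reduce the proposition to a concrete inequality for a fixed index set $J$ with $|J|\le d$. Since $\gamma(d,\K)=\inf_{|J|\le d}\gamma(0,J,\K)$, and $\S(0,J,\K)$ consists exactly of matrices $A$ with $\a_j\in\span(K_j)$ for $j\in J$ and $\a_j=0$ otherwise, it suffices to show that for any such $J$ and any admissible $\{\a_j\}_{j\in J}$,
\[
    \sum_{j\in J}\|\a_j\|_2 \;\le\; \frac{\sqrt{d}}{\sqrt{1-(d-1)\delta(\K)}\,\sigma^+_{\min}}\,\Big\|\sum_{j\in J}\Kh_j\a_j\Big\|_2.
\]
I would set $\v_j = \Kh_j\a_j$ for $j\in J$ and attack the problem in two independent pieces: a per-kernel step that passes from $\|\a_j\|_2$ to $\|\v_j\|_2$, and a cross-kernel step that controls $\sum_j \|\v_j\|_2$ by $\|\sum_j \v_j\|_2$ using $\delta(\K)$.

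For the per-kernel step, since $\a_j \in \span(K_j) = \span(\Kh_j)$, the action of $\Kh_j$ on $\a_j$ is bounded below by the smallest nonzero eigenvalue, which is at least $\sigma^+_{\min}$. Hence $\|\a_j\|_2 \le \|\v_j\|_2/\sigma^+_{\min}$, and summing over $j\in J$ reduces everything to controlling $\sum_j \|\v_j\|_2$ by $\|\sum_j \v_j\|_2$.

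For the cross-kernel step I would expand
\[
    \Big\|\sum_{j\in J}\v_j\Big\|_2^2 = \sum_{j\in J}\|\v_j\|_2^2 + \sum_{\substack{i,j\in J\\ i\ne j}} \v_i^{\top}\v_j,
\]
and use the definition of $\delta(\K)$ to bound each off-diagonal term by $\delta(\K)\|\v_i\|_2\|\v_j\|_2$ in absolute value. Writing $s_j=\|\v_j\|_2$ and using $\sum_{i\ne j}s_is_j=(\sum_j s_j)^2-\sum_j s_j^2$, this yields
\[
    \Big\|\sum_{j\in J}\v_j\Big\|_2^2 \;\ge\; (1+\delta(\K))\sum_{j\in J}s_j^2 - \delta(\K)\Big(\sum_{j\in J}s_j\Big)^2.
\]
Now I would invoke Cauchy--Schwarz in the form $\sum_j s_j^2 \ge (\sum_j s_j)^2/|J|\ge (\sum_j s_j)^2/d$, which is where the cardinality bound $|J|\le d$ enters. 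Simplifying the coefficient gives the key estimate
\[
    \Big\|\sum_{j\in J}\v_j\Big\|_2^2 \;\ge\; \frac{1-(d-1)\delta(\K)}{d}\Big(\sum_{j\in J}\|\v_j\|_2\Big)^2,
\]
which is meaningful precisely under the hypothesis $\delta(\K) < 1/(d-1)$. Combining with the per-kernel step yields the claimed bound.

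There is no deep obstacle here; the argument is essentially Gershgorin-style bookkeeping. The only subtle point to watch is making sure the correlation bound in the definition of $\delta(\K)$ is applied correctly, namely as a uniform bound $|\v_i^\top \v_j|\le \delta(\K)\|\v_i\|_2\|\v_j\|_2$ over all distinct $i,j$ in the chosen subset $J$, and that the trick of using Cauchy--Schwarz with $|J|\le d$ (rather than a naive $\sum s_j^2\ge s_{\max}^2$) is what produces the clean $\sqrt{d}$ factor rather than a crude dimension-dependent loss.
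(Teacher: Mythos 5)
Your proof is correct and follows the same basic template as the paper's (Appendix C): expand $\bigl\|\sum_{j\in J}\Kh_j\a_j\bigr\|_2^2$ into diagonal and cross terms, control the cross terms by $\delta(\K)$, and use Cauchy--Schwarz with $|J|\le d$ to pass from $\sum_j s_j^2$ to $\bigl(\sum_j s_j\bigr)^2/d$. The one genuine difference is where the eigenvalue bound enters, and it is in your favor. The paper lower-bounds the expansion by $(\sigma^+_{\min})^2\bigl(\sum_{j\in J}\|\a_j\|_2^2-\delta(\K)\sum_{i\ne j}\|\a_i\|_2\|\a_j\|_2\bigr)$, i.e.\ it factors $(\sigma^+_{\min})^2$ out of the negative cross terms as well; as written that step would require $\|\Kh_i\a_i\|_2\le\sigma^+_{\min}\|\a_i\|_2$, which is the wrong direction (the valid inequality on $\span(K_j)$ is $\|\Kh_j\a_j\|_2\ge\sigma^+_{\min}\|\a_j\|_2$), so the paper's intermediate inequality is not justified without further argument. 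You instead work throughout with $\v_j=\Kh_j\a_j$, obtain $\bigl\|\sum_{j\in J}\v_j\bigr\|_2^2\ge\frac{1-(d-1)\delta(\K)}{d}\bigl(\sum_{j\in J}\|\v_j\|_2\bigr)^2$ using only the definition of $\delta(\K)$, and apply $\|\a_j\|_2\le\|\v_j\|_2/\sigma^+_{\min}$ once at the end; every step is valid and it yields exactly the stated constant $\sqrt{d}/\bigl(\sqrt{1-(d-1)\delta(\K)}\,\sigma^+_{\min}\bigr)$. So your reorganization is not just cosmetic: it is the cleaner and fully rigorous version of the argument, and it confirms the proposition as stated, whereas the paper's chain, if repaired literally, would give the weaker form $\sqrt{d}/\sqrt{(\sigma^+_{\min})^2-(d-1)\delta(\K)}$ under the stronger hypothesis $\delta(\K)<(\sigma^+_{\min})^2/(d-1)$.
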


\textbf{Remark:} The correlation between different kernels has beed used in the previous studies for proving learning bounds for multiple kernel learning.
For example, in~\citep{Cortes:2009:LRL:1795114.1795128}, the authors derived generalization bounds  for  kernel ridge regression with $\ell_2$ regularization on  multiple kernels in  the case where the kernels are orthogonal.

The following lemma shows that when $\gamma(2d, \K)$ is bounded, the solution $f$ of the Algorithm~\ref{alg:2} converges to $f^*$ in a geometric rate.
\begin{lemma}\label{lemma:4}
Let $f$ be the solution output from Algorithm~\ref{alg:2}, and $(f^*, \fh, \varepsilon^*)$ be defined in~(\ref{eqn:def}).  For any $\mu \geq 1$, we have either $\E_N(f) - \E_N(f^*) \leq \mu(\E_N(\fh) - \E_N(f^*))$ or
\[
    \E_N(f) - \E_N(f^*)\leq \frac{1}{2}\left[\max(0, 1 - \tau)\right]^d,
\]
where $\tau$ is defined as
\[
    \tau = \frac{(\mu - 1)^2}{8\mu(\mu + 1)\gamma(2d, \K)}.
\]
\end{lemma}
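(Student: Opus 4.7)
The plan is to exploit the quadratic structure of the squared loss together with the optimality of $f^*$ over the full RKHS $\H$ and the sparsity-measuring constant $\gamma(2d,\K)$ to turn each greedy step into a contraction of $\eta_k := \E_N(f^k)-\E_N(f^*)$. Define $\boldsymbol{r}:=f^{k-1}(\x)-\boldsymbol{y}$, $\boldsymbol{r}^*:=f^*(\x)-\boldsymbol{y}$, $\boldsymbol{\delta}^*:=f^{k-1}(\x)-f^*(\x)$, and $\boldsymbol{\sigma}:=\fh(\x)-f^*(\x)$. The Euler condition for $f^*$ on $\H$ reads $(\boldsymbol{r}^*)^\top v(\x)=0$ for all $v\in\H$; choosing $v=\frac{1}{N}\sum_a b_a\kappa_j(\x_a,\cdot)$ forces $\Kh_j\boldsymbol{r}^*=0$ for every $j$, hence $\Kh_j\boldsymbol{r}=\Kh_j\boldsymbol{\delta}^*$, and the usual Pythagorean step yields the convenient identities $\eta_{k-1}=\|\boldsymbol{\delta}^*\|_2^2/(2N)$ and $\varepsilon^*=\|\boldsymbol{\sigma}\|_2^2/(2N)$.

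For the per-iteration progress, since $\a^k$ is the projection of $\boldsymbol{r}$ onto $\span(K_{j_k})$ one has $f_{j_k}(\x)=\Kh_{j_k}\boldsymbol{r}$ and $\boldsymbol{r}^k=(I-\Kh_{j_k})\boldsymbol{r}$, so a direct expansion combined with $0\preceq \Kh_{j_k}\preceq I$ (a consequence of $\kappa_j(\x,\x)\le 1$ and $\Kh_j=K_j/N$) gives
\[
\E_N(f^{k-1})-\E_N(f^k)=\tfrac{1}{2N}\boldsymbol{r}^\top(2\Kh_{j_k}-\Kh_{j_k}^2)\boldsymbol{r}\ \ge\ \tfrac{1}{2N}\|\Kh_{j_k}\boldsymbol{r}\|_2^2.
\]
Writing $M:=\max_j\|\Kh_j\boldsymbol{r}\|_2=\|\Kh_{j_k}\boldsymbol{\delta}^*\|_2$, the progress is therefore at least $M^2/(2N)$. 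To lower-bound $M$ in terms of $\eta_{k-1}$, observe that $f^{k-1}$ activates at most $k-1\le d-1$ kernels and $\fh$ at most $d$, so $\boldsymbol{\delta}:=f^{k-1}(\x)-\fh(\x)=\sum_j \Kh_j\alpha_j$ admits a representation with $\alpha_j\in\span(K_j)$ and fewer than $2d$ non-zero blocks. The definition of $\gamma(2d,\K)$ yields $\sum_j\|\alpha_j\|_2\le\gamma(2d,\K)\,\|\boldsymbol{\delta}\|_2$, so by Cauchy--Schwarz
\[
\boldsymbol{r}^\top\boldsymbol{\delta}=\sum_j(\Kh_j\boldsymbol{r})^\top\alpha_j\ \le\ M\sum_j\|\alpha_j\|_2\ \le\ M\,\gamma(2d,\K)\,\|\boldsymbol{\delta}\|_2.
\]
Because $(\boldsymbol{r}^*)^\top\boldsymbol{\delta}=0$ I can rewrite $\boldsymbol{r}^\top\boldsymbol{\delta}=\|\boldsymbol{\delta}^*\|_2^2-(\boldsymbol{\delta}^*)^\top\boldsymbol{\sigma}$, and $\|\boldsymbol{\delta}\|_2^2=\|\boldsymbol{\delta}^*\|_2^2-2(\boldsymbol{\delta}^*)^\top\boldsymbol{\sigma}+\|\boldsymbol{\sigma}\|_2^2$, turning the display above into an algebraic relation among $\|\boldsymbol{\delta}^*\|_2$, $\|\boldsymbol{\sigma}\|_2$, $(\boldsymbol{\delta}^*)^\top\boldsymbol{\sigma}$, $M$, and $\gamma(2d,\K)$.

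Finally, invoking the hypothesis that Case~1 fails, i.e.\ $\eta_{k-1}>\mu\varepsilon^*$, translates to $\|\boldsymbol{\sigma}\|_2<\|\boldsymbol{\delta}^*\|_2/\sqrt{\mu}$; plugging this into the relation above and isolating $\|\boldsymbol{\delta}^*\|_2$ produces $\|\boldsymbol{\delta}^*\|_2\le C(\mu)\,M\,\gamma(2d,\K)$, which combined with Progress$\,\ge M^2/(2N)$ gives the contraction $\eta_k\le(1-\tau)\eta_{k-1}$ for an appropriate $\tau$. Iterating this contraction $d$ times, and bounding $\eta_0\le \E_N(0)=1/2$ (which uses $y_i\in\{\pm1\}$ and the fact that $\ell$ is half the squared error), delivers the stated bound $\eta_d\le\tfrac{1}{2}[\max(0,1-\tau)]^d$. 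The principal difficulty is the final algebraic passage: a naive combination of the triangle inequality $\|\boldsymbol{\delta}\|_2\le\|\boldsymbol{\delta}^*\|_2+\|\boldsymbol{\sigma}\|_2$ with AM--GM on $\|\boldsymbol{\delta}^*\|_2^2-(\boldsymbol{\delta}^*)^\top\boldsymbol{\sigma}\le M\gamma(2d,\K)(\|\boldsymbol{\delta}^*\|_2+\|\boldsymbol{\sigma}\|_2)$ tends to produce a contraction rate of order $(\mu-1)^2/[\mu\,\gamma(2d,\K)^2]$, whereas the lemma asserts $\tau=(\mu-1)^2/[8\mu(\mu+1)\gamma(2d,\K)]$ with a single power of $\gamma$. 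Recovering the sharper rate appears to require keeping the exact quadratic identity for $\|\boldsymbol{\delta}\|_2^2$ and coupling it with the identity for $\boldsymbol{r}^\top\boldsymbol{\delta}$ before dividing by $\|\boldsymbol{\delta}\|_2$, so that one factor of $\gamma(2d,\K)$ cancels, while the $\mu+1$ emerges from a balanced weighting of the $\mu/(\mu-1)$ and $1$ terms in the AM--GM; pinning down this coupled estimate is where the real work lies.
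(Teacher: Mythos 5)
Your proposal follows essentially the same route as the paper's Appendix~B, just rewritten in the empirical-vector domain: your per-step progress bound $\E_N(f^{k-1})-\E_N(f^k)\ge\frac{1}{2N}\|\Kh_{j_k}\boldsymbol{r}\|_2^2$ is exactly the paper's $\frac12\|\nabla_{j_k}\E_N(f^{k-1})\|^2_{\ld}$ (both come from the same expansion, you via $0\preceq\Kh_{j_k}\preceq I$, the paper via $\|\cdot\|_{\ld}\le\|\cdot\|_{\H_j}$); your use of $\gamma(2d,\K)$ on the at most $2d$ nonzero blocks of $f^{k-1}-\fh$ is the paper's bound $\sum_j\|f^k_j-\fh_j\|_a\le\gamma(2d,\K)\|f^k-\fh\|_{\ld}$; the Pythagorean identities from $\Kh_j\boldsymbol{r}^*=0$ and the case split on $\mu$ mirror the paper's use of $\E_N(f)-\E_N(f^*)\ge\frac12\|f-f^*\|^2_{\ld}$ and its $k(\mu)$ device (your implicit monotonicity argument handles the dichotomy equally well). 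All of these steps are sound.

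The one thing you could not obtain --- the single power of $\gamma(2d,\K)$ in $\tau$ --- is not a missing idea on your side: there is no hidden ``coupled estimate'' in the paper that recovers it. Completing your algebra (combine $M\,\gamma(2d,\K)\,\|\boldsymbol{\delta}\|_2\ge\boldsymbol{r}^\top\boldsymbol{\delta}\ge N\bigl(\E_N(f^{k-1})-\E_N(\fh)\bigr)$ with $\|\boldsymbol{\delta}\|_2^2\le2\|\boldsymbol{\delta}^*\|_2^2+2\|\boldsymbol{\sigma}\|_2^2$ and the case hypothesis) yields a per-step decrease of at least $\frac{(\mu-1)^2}{8\mu(\mu+1)\gamma(2d,\K)^2}\,\eta_{k-1}$, i.e.\ the lemma with $\gamma(2d,\K)^2$ in the denominator. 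The paper's own chain proves no more: it establishes $\E_N(f^k)-\E_N(f^{k+1})\ge\bigl(\E_N(f^k)-\E_N(\fh)\bigr)^2/\bigl[2(\sum_j\|\fh_j-f^k_j\|_a)^2\bigr]$ and $\sum_j\|\fh_j-f^k_j\|_a\le\gamma(2d,\K)\|f^k-\fh\|_{\ld}$, but then writes the denominator as $2\gamma(2d,\K)\|f^k-\fh\|^2_{\ld}$, silently dropping the square on $\gamma(2d,\K)$; this dropped exponent is the sole source of the stated $\tau$. Since $\gamma(2d,\K)\ge1$ (test the definition with a single block $\a_j$ equal to an eigenvector of $\Kh_j$, whose nonzero eigenvalues are at most $1$ because $\kappa_j(\x,\x)\le1$), the stated rate is strictly stronger than what either derivation supports, so the ``real work'' you point to is not achievable along this route. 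Treat your completed argument, with $\tau=(\mu-1)^2/\bigl[8\mu(\mu+1)\gamma(2d,\K)^2\bigr]$, as the correct form of the lemma; it is exactly what the paper's proof actually establishes.
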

The proof is deferred to Appendix B.

As indicated by Lemma~\ref{lemma:4}, Algorithm~\ref{alg:2} achieves a geometric convergence rate of $(1 - \tau)^d$, where $\tau$ depends on the parameter $\gamma(2d, \K)$. In particular, the smaller the $\gamma(2d, \K)$, the faster the convergence. One shortcoming with Lemma~\ref{lemma:4} is that it does not give the explicit expression for bounding $\E_N(f) - \E_N(f^*)$ because the bound depends on parameter $\mu$. The following theorem makes the bound more explicit.
\begin{theorem} \label{cor:1}
Let $f$ be the solution output from Algorithm~\ref{alg:2}, and $(f^*, \varepsilon^*)$ be defined in~(\ref{eqn:def}).
 If the number of selected kernels $d$ is sufficiently large, i.e.,
\[
    d  \geq 16\gamma(2d, \K) \ln\left(\frac{1}{12\varepsilon^*}\right),
\]
then  we have
\[
    \E_N(f) - \E_N(f^*) \leq6\varepsilon^*.
\]
\end{theorem}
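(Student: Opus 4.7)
The plan is to specialize Lemma~\ref{lemma:4} with the constant $\mu=6$ so that its first alternative delivers the target bound verbatim, and then to show that, under the given lower bound on $d$, the geometric factor in the second alternative is forced below $12\varepsilon^*/2 = 6\varepsilon^*$. With $\mu=6$, Lemma~\ref{lemma:4} states that either
\begin{equation*}
\E_N(f) - \E_N(f^*) \leq 6(\E_N(\fh) - \E_N(f^*)) = 6\varepsilon^*,
\end{equation*}
which is the conclusion, or
\begin{equation*}
\E_N(f) - \E_N(f^*) \leq \tfrac{1}{2}\bigl[\max(0,\,1-\tau)\bigr]^{d}, \qquad \tau=\frac{(\mu-1)^2}{8\mu(\mu+1)\gamma(2d,\K)}=\frac{25}{336\,\gamma(2d,\K)}.
\end{equation*}
Hence only the second case requires work.

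In the second case I would first dispose of the trivial subcases. If $\tau\geq 1$ the bound is $0$. If $\varepsilon^*\geq 1/12$, the crude bound $\tfrac12 (1-\tau)^d \leq \tfrac12 \leq 6\varepsilon^*$ already gives the conclusion. Assume therefore $\tau\in(0,1)$ and $\varepsilon^*<1/12$, so $\ln\bigl(1/(12\varepsilon^*)\bigr)>0$. Using the elementary inequality $1-\tau\leq e^{-\tau}$ I get
\begin{equation*}
\tfrac{1}{2}(1-\tau)^d \leq \tfrac{1}{2}e^{-d\tau},
\end{equation*}
so it suffices to show $e^{-d\tau}\leq 12\varepsilon^*$, i.e., $d\tau\geq \ln\bigl(1/(12\varepsilon^*)\bigr)$.

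The remaining step is a clean numerical check. Plugging in $\tau = 25/(336\gamma(2d,\K))$ together with the hypothesis $d\geq 16\gamma(2d,\K)\ln\bigl(1/(12\varepsilon^*)\bigr)$ gives
\begin{equation*}
d\tau \geq 16\gamma(2d,\K)\ln\bigl(1/(12\varepsilon^*)\bigr)\cdot\frac{25}{336\,\gamma(2d,\K)} = \frac{25}{21}\ln\bigl(1/(12\varepsilon^*)\bigr) \geq \ln\bigl(1/(12\varepsilon^*)\bigr),
\end{equation*}
which closes the argument. The whole proof is therefore a short corollary of Lemma~\ref{lemma:4}: no substantive obstacle remains, since the heavy lifting sits inside the lemma itself. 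The only thing to verify is the constant inequality $16\cdot(\mu-1)^2/\bigl(8\mu(\mu+1)\bigr)\geq 1$ at $\mu=6$, which holds with slack $25/21$; optimizing over $\mu$ would shrink the constant $16$ slightly but is not needed for the stated bound.
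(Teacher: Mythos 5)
Your proof is correct and follows essentially the same route as the paper: both derive the theorem directly from Lemma~\ref{lemma:4} via $1-\tau\le e^{-\tau}$ and the assumed lower bound on $d$, with your fixed choice $\mu=6$ playing the role of the paper's parameterization $\mu=(2+z)/(1-z)$ evaluated at $z=1/2$. Your explicit handling of the edge cases $\tau\ge 1$ and $\varepsilon^*\ge 1/12$ is a small bonus in care, not a difference in approach.
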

\begin{proof}
According to Lemma~\ref{lemma:4}, we have
\[
    \E_N(f) - \E_N(f^*) \leq \min\limits_{\mu \geq 1} \max\left(\mu \varepsilon_*, \frac{1}{2}\left[\max(0, 1 - \tau)\right]^d  \right).
\]
It is straightforward to show that for any $z \in [0, 1)$, if $\mu \geq (2+z)/(1 - z)$, we have $\tau > z/[8\gamma]$, where $\gamma = \gamma(2d, \K)$. We thus have
\[
    \E_N(f) - \E_N(f^*) \leq \min\limits_{z \in [0, 1)} \max\left(\frac{3\varepsilon_*}{1-z} , \frac{1}{2}\exp(-dz/[8\gamma]) \right)\leq \min\limits_{z \in [0, 1)} \max\left(\frac{3\varepsilon_*}{1-z} , \frac{1}{2}\exp\left(2z\ln(12\varepsilon^*)\right)  \right).
\]
The optimum of R.H.S is achieved when
\[
    \frac{3\varepsilon_*}{1-z} =  \frac{1}{2}\exp\left(2z\ln(12\varepsilon^*)\right).
\]
Under the condition given in the theorem, we have the above equation satisfied if $z = 1/2$. We also note that the solution to the above equation is unique because $ \frac{3\varepsilon_*}{1-z} - \frac{1}{2}\exp\left(2z\ln(12\varepsilon^*)\right)$ is monotonically increasing in $z$. We complete the proof by plugging $z = 1/2$.
\end{proof}

\subsection{Generalization Bound}
As previously mentioned, there is a rich body of literature dealing with the generalization error bounds of MKL algorithms~\citep{hussain:2011:note,Ying:2007:mkl,Bousquet:2003:mkl,Srebro:2006:mkl,Ying:2009:mkl}. In the remarkable work of~\citep{Lanckriet:2004:LKM}, a convergence rate of  $O(\sqrt{m/N})$ has been proved for MKL with $\ell_1$ constraint.  After that, this bound is improved utilizing the pseudo-dimension of the given kernel class  in~\citep{Srebro:2006:mkl}. \citet{Cortes:2009:LRL:1795114.1795128} studied the problem of multiple kernel learning with $\ell_2$ regularization  for regression, and derived learning bounds  that have an additive term $O(\sqrt{m/N})$ when kernels are orthogonal. In~\citep{Cortes:2010:mkl} new generalization bounds for the family of convex combination of kernel function with $\ell_1$ constraint were presented which have logarithmic dependency on the number of kernels (i.e., $\sqrt{\ln m}$).  It is worth mentioning that although the mentioned generalization bounds differ in their dependency on the number of base kernels, however, all convergence rate presented  are of order $1/\sqrt{N}$ with respect to the number $N$ of samples.  It is worth mentioning that although the mentioned
generalization bounds differ in their dependency on the number of base kernels, however,
all convergence rate presented are of order $1/\sqrt{N}$ with respect to the number $N$ of samples. Recently, \citep{Marius-2011-gen-bound}  utilized local Rademacher complexity and derived
a tighter upper bound with respect to $N$ for $\ell_p$ norm MKL by considering the decay rate of eigenvalues of kernel matrices. \cite{Suzuki-2011-gen-bound} presented a unified framework to derive the bounds  of MKL with arbitrary mixed-norm type regularization.

To present the generalization error bound for the sparse MKL solution obtained by Algorithm~\ref{alg:2}, we introduce the  following bounded RKHS  $\H(R)$  as
\[
\H(R) = \left\{f = \sum_{j=1}^m f_j: f_j \in \H_j, j \in [m], \sum_{j=1}^m \|f_j\|_{\H_j} \leq R \right\}.
\]
The generalization error bound is stated in the following theorem.
\begin{theorem}
\label{thm:2}
Let $f$ be the solution output from Algorithm~\ref{alg:2}, $(f^*, \varepsilon^*)$ be defined in~(\ref{eqn:def}),
and  $f^*_R$ be the optimal function for minimizing the expected loss in $\H(R)$, i.e. $f^*_R = \arg\min\limits_{f\in \H(R)} \E(f)$. 
Assuming $A > 1$, $m \geq 3$, and $A\ln(m+1) \leq N \leq 2^{m+1}$,  we have either $\|f - f^*_R \| \leq 8\max(R, \sqrt{d})/\sqrt{N}$ or with a probability at least $1 - (m+1)^{-A + 1}$,
\[
    \E(f) - \E(f^*_R)\leq \E_N(f) - \E_N(f^*)+ 196(R+\sqrt{d})^2\sqrt{\frac{A\ln(m+1)}{N}}.
\]
Under the assumption $
   d \geq 16\gamma(2d, \K) \ln\left(\frac{1}{12\varepsilon^*}\right)
$, 
we have
\[
    \E(f) - \E(f^*_R ) \leq 6\varepsilon^* + 196(R+\sqrt{d})^2\sqrt{\frac{A\ln(m+1)}{N}}.
\]
\end{theorem}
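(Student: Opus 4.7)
The plan is to prove the theorem in two stages. The second displayed inequality follows immediately from the first by invoking Theorem~\ref{cor:1}: under the stated assumption on $d$, we have $\E_N(f)-\E_N(f^*)\leq 6\varepsilon^*$, which substituted into the first bound yields the claim. Hence the real content lies in establishing the dichotomy: either $\|f-f^*_R\|\leq 8\max(R,\sqrt{d})/\sqrt{N}$, or with probability at least $1-(m+1)^{-A+1}$ the stated local Rademacher bound on $\E(f)-\E(f^*_R)$ holds.

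For the dichotomy I would use the local Rademacher complexity framework. The output $f$ of Algorithm~\ref{alg:2} satisfies $|J(f)|\leq d$, so $f$ lies in the union $\bigcup_{J\subset[m],|J|\leq d}\H_J(R)$, where $\H_J(R)=\{\sum_{j\in J}f_j:f_j\in\H_j,\sum_{j\in J}\|f_j\|_{\H_j}\leq R\}$. Since there are at most $\binom{m}{d}\leq (em/d)^d$ such supports, a union bound over $J$ will produce the $\sqrt{\ln(m+1)}$ factor. The key steps I would carry out are: (i) pass to the excess loss class $\mathcal{G}(r)=\{\ell(f,\cdot)-\ell(f^*_R,\cdot):f\in\H(R),|J(f)|\leq d,\|f-f^*_R\|_{\ld}^{2}\leq r\}$, using that $\kappa_j(\x,\x)\leq1$ and $\|f\|_\infty\leq\|f\|\leq R$ to bound sup-norms; (ii) apply Talagrand's concentration inequality to $\sup_{g\in\mathcal{G}(r)}|\E g-\E_N g|$; (iii) use the contraction inequality (square loss is Lipschitz on the bounded range) to reduce the bound on $\mathcal{R}_N(\mathcal{G}(r))$ to the Rademacher complexity of $\{f-f^*_R:f\in\H(R),|J(f)|\leq d,\|f-f^*_R\|_{\ld}^{2}\leq r\}$; (iv) bound the Rademacher complexity of each $\H_J(R+R)$ by combining the localization $\|f-f^*_R\|_{\ld}^2\leq r$ with a standard kernel Rademacher bound on the combined kernel $\sum_{j\in J}\kappa_j$, giving a local bound of order $\sqrt{r\ln(m+1)/N}$; (v) solve the resulting fixed-point inequality $r\geq c(R+\sqrt d)\sqrt{r\ln(m+1)/N}$ to obtain the critical radius $r^*=O((R+\sqrt d)^2\ln(m+1)/N)$; (vi) apply the standard peeling argument, which produces exactly the dichotomy — when $\|f-f^*_R\|$ is below the threshold $8\max(R,\sqrt d)/\sqrt N$ the localization argument becomes vacuous and the first branch is triggered, otherwise the empirical process is uniformly controlled and the second branch holds with the claimed probability.

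The step I expect to be the main obstacle is (iv), namely obtaining the sharp $(R+\sqrt d)^2$ envelope rather than the na\"ive $R^2$ envelope. The extra $\sqrt d$ term arises because when $f$ uses only $d$ kernels, the relationship between $\|f\|=\sum_{j\in J(f)}\|f_j\|_{\H_j}$ and the ambient $\ell_2$-type norms induced by the sum kernel $\sum_{j\in J(f)}\kappa_j$ admits a Cauchy–Schwarz-type tightening that saves a factor growing with $d$; carefully tracking this in the Rademacher bound (and verifying that it matches on both branches of the peeling argument) is where the delicate analysis sits. Everything else — the union bound over $|J|\leq d$ giving $\sqrt{\ln(m+1)}$, the condition $A\ln(m+1)\leq N\leq 2^{m+1}$ validating the Talagrand constants, and the final substitution from Theorem~\ref{cor:1} — is routine once (iv) is in hand.
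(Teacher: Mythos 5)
Your reduction of the second display to Theorem~\ref{cor:1} is exactly what the paper does, but the route you sketch for the first display has a genuine gap and also misidentifies where the $\sqrt d$ and $\sqrt{\ln(m+1)}$ factors come from. The critical missing ingredient is a bound on the functional norm of the algorithm's output: you place $f$ in $\bigcup_{|J|\leq d}\H_J(R)$ and even write $\|f\|\leq R$, but nothing in Algorithm~\ref{alg:2} constrains its output to the comparator ball $\H(R)$ — $R$ is a free parameter defining $f^*_R$ only. The paper first proves, using the algorithm's structure (the per-iteration inequalities (\ref{eqn:norm}) and (\ref{eqn:norm2}) from the proof of Lemma~\ref{lemma:4}, which give $\|f_{j_k}\|^2_{\H_{j_k}}\leq 2\bigl(\E_N(f^{k-1})-\E_N(f^{k})\bigr)$, telescoped and combined with $\E_N(f^0)\leq \|\y\|_2^2/(2N)\leq 1/2$), that $\|f\|\leq\sqrt d$. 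This is precisely where the $\sqrt d$ in the threshold $8\max(R,\sqrt d)/\sqrt N$ and in the envelope $(R+\sqrt d)^2$ originates — not from the ``Cauchy--Schwarz-type tightening'' over the $d$ active kernels that you flag as the delicate step. Without that norm bound your uniform-deviation argument has no envelope at all, so the proposal as written does not close.

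Beyond that, the machinery you propose differs from the paper's and does not obviously reproduce the stated bound. A union bound over the $\binom{m}{d}$ supports costs $\sqrt{d\ln(em/d)}$, not $\sqrt{\ln(m+1)}$, and an $\ld$-localized fixed-point analysis would aim at a critical radius of order $(R+\sqrt d)^2\ln(m+1)/N$, i.e.\ a $1/N$-type rate, whereas the theorem asserts a $\sqrt{\ln(m+1)/N}$ deviation term; it is also unclear how a vacuous-localization branch in that scheme would yield the dichotomy stated in the \emph{functional} norm $\|f-f^*_R\|$. The paper avoids support counting entirely: it works in the functional-norm ball (applying Lemma~\ref{lemma:gen-bound-1} with $g=f^*_R$ and radius $r=R+\sqrt d$, after replacing $\E_N(f^*_R)$ by the smaller $\E_N(f^*)$), bounds the Rademacher complexity of the combined class by $r\max_{j}\sup_{\|f_j-g_j\|_{\H_j}\leq 1}R_N(f_j-g_j)$ via the $\ell_1$ structure, gets $\sqrt{\ln(m+1)}$ from concentration of the maximum of $m$ single-kernel complexities, and obtains the dichotomy from the restriction $r>r_0=8R/\sqrt N$ in that lemma — a restriction needed so the dyadic peeling has at most $O(\log_2 N)\leq m+1$ levels under $N\leq 2^{m+1}$, which is where the upper bound on $N$ enters. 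Any correct write-up needs (a) the algorithmic norm bound $\|f\|\leq\sqrt d$ and (b) a deviation bound whose $m$- and $d$-dependence actually matches $196(R+\sqrt d)^2\sqrt{A\ln(m+1)/N}$; your sketch supplies neither.
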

\textbf{Remark:} First, we should note that there is a tradeoff in the generalization bound with respect to $d$, since $\varepsilon^*$ could increase when $d$ decreases. Second, the generalization bound of the proposed algorithm for learning a combination of no more than $d$ kernels has an additive term $O(d\sqrt{\ln m/N})$, which deteriorates by a factor of $d$ compared to previous learning bounds of MKL.  Third,  if  we assume $\varepsilon^*$ is small, e.g., in the order of  $O(N^{-1/2})$, and  $\gamma(2d,\K)\leq O(\sqrt{d})$,  we can let $d = O(\ln^2 N)$, i.e. learning a combination of no more than $O(\ln^2 N)$ kernels, and we have the generalization error of the proposed algorithm bounded by $O(\ln^2 N\sqrt{\ln m/ N})$ , which only deteriorates by a factor of $\ln^2N$ compared with  the best  known learning bound of MKL (i.e. $O(\sqrt{\ln m/N})$). 

In order to prove Theorem~\ref{thm:2}, we need the following lemma to bound the concentration of regression error, where $(\ell\circ f)(\x, y)= \ell(f(\x), y)$, and $P_N$ and  $P$ are defined by 
\begin{eqnarray*}
P_N(F)& =& \frac{1}{N}\sum_{i=1}^NF(\x_i, y_i),\quad P(F) =  \mathrm E_{\x, y}[F(\x, y)], 
\end{eqnarray*}
for any function $F$ that takes $(\x, y)$ as input.

\begin{lemma} \label{lemma:gen-bound-1}
Define $r_0 = 8R/\sqrt{N}$ and $L = R+1$. Let $g \in \H(R)$ be a fixed function. Assume $A > 1$, and $A\ln(m+1) \leq N \leq 2^{m+1}$. With a probability at least $1 - (m+1)^{-A + 1}$, for any $f \in \H(R)$, and any $r > r_0$, we have
\begin{eqnarray*}
\sup\limits_{\sum_{i=1}^m \|f_i - g_i\|_{\H_j} \leq r} |(P - P_N)(\ell \circ f - \ell \circ g)| \leq 88Lr\sqrt{\frac{A\ln(m+1)}{N}}.
\end{eqnarray*}
\end{lemma}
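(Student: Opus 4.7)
The plan is to combine symmetrization, Ledoux--Talagrand contraction, an MKL-specific Rademacher bound, and Talagrand's concentration inequality, together with a peeling argument in $r$, to obtain the uniform bound stated in the lemma.

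\textbf{Reformulation.} I would parametrize by the deviation $h = f - g$, so the supremum is over $h$ in $\mathcal{F}_r := \{h = \sum_{j=1}^m h_j : h_j \in \H_j,\; \sum_j \|h_j\|_{\H_j} \leq r\}$. Writing the loss increment pointwise as
\[
\phi_h(\x, y) \;=\; \ell(g(\x)+h(\x), y) - \ell(g(\x), y) \;=\; \tfrac{1}{2} h(\x)^2 + h(\x)(g(\x) - y),
\]
and using the reproducing property with $\kappa_j(\x,\x)\leq 1$ to conclude $|f(\x)|, |g(\x)| \leq R$ for every $f \in \H(R)$, the envelope satisfies $\|\phi_h\|_\infty = O(L^2)$, and the real map $u \mapsto \ell(g(\x)+u, y) - \ell(g(\x), y)$ is Lipschitz with constant $O(L)$ on the effective range of $h(\x)$. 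A direct calculation also yields the variance bound $\mathrm{Var}_P(\phi_h) = O(L^2 r^2)$, which will be critical to making the final bound scale linearly in $r$.

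\textbf{Rademacher complexity of $\mathcal{F}_r$.} For each $j \in [m]$, the empirical Rademacher complexity of the unit $\H_j$-ball is at most $1/\sqrt{N}$ by Jensen's inequality together with $\kappa_j(\x,\x)\leq 1$. Writing the $\ell_1$-combined Rademacher complexity as
\[
\widehat{\mathcal R}_N(\mathcal{F}_r) \;=\; \frac{r}{N}\, \mathrm{E}_\sigma \max_{j \in [m]} \Bigl\| \sum_{i=1}^N \sigma_i \kappa_j(\x_i,\cdot) \Bigr\|_{\H_j},
\]
and applying a standard sub-Gaussian maximum inequality to the $m$ nonnegative random variables on the right, I would obtain $\widehat{\mathcal R}_N(\mathcal{F}_r) \leq c_0 r\sqrt{\ln(m+1)/N}$. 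Symmetrization and Ledoux--Talagrand contraction (with the Lipschitz constant $O(L)$ from the previous step) then give $\mathrm{E}\sup_{h\in \mathcal{F}_r}|(P - P_N)\phi_h| \leq c_1 L r\sqrt{\ln(m+1)/N}$.

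\textbf{Concentration and peeling.} For a single $r$, Bousquet's form of Talagrand's inequality applied to $\{\phi_h : h \in \mathcal{F}_r\}$ with envelope $O(L^2)$ and variance $O(L^2 r^2)$ gives, with probability at least $1 - \delta$,
\[
\sup_{h \in \mathcal{F}_r}|(P-P_N)\phi_h| \;\leq\; c_2 Lr\sqrt{\tfrac{\ln(m+1) + \ln(1/\delta)}{N}} \;+\; c_3 L^2\tfrac{\ln(1/\delta)}{N}.
\]
The threshold $r > r_0 = 8R/\sqrt{N}$ is exactly what ensures that the residual $O(L^2\ln(1/\delta)/N)$ term is dominated by the leading $r$-linear term. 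To upgrade to a bound uniform in $r$, I would cover $r \in (r_0, 2R]$ by geometric shells $[2^k r_0, 2^{k+1} r_0]$, $k = 0, 1, \ldots, K$, where $K = O(\log N)$ is kept under control by the hypothesis $N \leq 2^{m+1}$. Choosing $\delta_k = (m+1)^{-A}/2^k$ and taking a union bound consumes total probability at most $(m+1)^{-A+1}$, while the logarithmic factor $\ln(1/\delta_k) \leq A\ln(m+1) + k\ln 2 = O(A\ln(m+1))$ absorbs cleanly into the square root. Within each shell, monotonicity of the bound in $r$ lets me evaluate the shell-wise estimate at the outer radius and still obtain a bound proportional to the actual $r$ (losing only a factor of $2$). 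Careful tracking of absolute constants through symmetrization, contraction, the sub-Gaussian maximum, Bousquet's inequality, and the peeling union bound then yields the stated prefactor $88L$.

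\textbf{Main obstacle.} The most delicate ingredient is the variance bound $\mathrm{Var}_P(\phi_h) = O(L^2 r^2)$, as this is what forces the Bousquet fluctuation term to scale linearly in $r$; without it, peeling would only produce a bound proportional to the outer radius of each shell rather than to $r$ itself. Arranging the peeling so that the $\ln(1/\delta_k)$ contributions across shells combine with the $\sqrt{\ln(m+1)/N}$ from the Rademacher step, while respecting both the lower threshold $r_0 = 8R/\sqrt N$ and the regime $A\ln(m+1) \leq N \leq 2^{m+1}$, is the other point requiring care; once this is done, the remaining work is routine constant bookkeeping.
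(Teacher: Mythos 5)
Your overall architecture is the same as the paper's: reduce the supremum to the Rademacher complexity of the $\ell_1$-hull (which factorizes as $r$ times the maximum over the $m$ single-kernel unit balls, giving $O(r\sqrt{\ln(m+1)/N})$), apply Lipschitz contraction with constant $L$, invoke Talagrand-type concentration for a fixed radius, and then peel over dyadic shells in $r$, using $r_0=8R/\sqrt{N}$ and $N\le 2^{m+1}$ to control the multiplicity. However, two of your steps fail as written. First, you assign the wrong role to $r_0$. With the coarse envelope $\sup_h\|\phi_h\|_\infty = O(L^2)$, Bousquet's residual term is $c_3L^2\ln(1/\delta)/N$, and at $r\approx r_0$ with $\delta=(m+1)^{-A}$ the ratio of this residual to the leading term $c_2Lr\sqrt{A\ln(m+1)/N}$ is of order $L\sqrt{A\ln(m+1)}/R \ge \sqrt{A\ln(m+1)}$, which is unbounded in $m$ (and also blows up as $R\to 0$); the hypotheses $A\ln(m+1)\le N$ and $r>r_0$ do not prevent this. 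The correct handling, which is what the paper does, is to use the $r$-proportional envelope: since $|f(\mathbf{x})-g(\mathbf{x})|\le \sum_j\|f_j-g_j\|_{\mathcal{H}_j}\le r$ and the square loss is $L$-Lipschitz on the relevant range, $\|\ell\circ f-\ell\circ g\|_\infty\le Lr$ and $\sqrt{P(\ell\circ f-\ell\circ g)^2}\le Lr$; then the residual is $Lr\,t/N\le Lr\sqrt{t/N}$ once $t=A\ln(m+1)\le N$, proportional to $r$ with no threshold needed. The only purpose of $r_0$ is to cap the number of peeling shells at roughly $\log_2(2R/r_0)\le \tfrac12\log_2 N$.

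Second, your peeling bookkeeping does not close. Choosing $\delta_k=(m+1)^{-A}/2^k$ makes the union bound cheap, but the claim $\ln(1/\delta_k)=A\ln(m+1)+k\ln 2=O(A\ln(m+1))$ is false: $k$ can be as large as $\tfrac12\log_2 N$, and under the stated hypotheses $\log_2 N$ can be of order $m+1$, which dwarfs $A\ln(m+1)$ when $A$ is close to $1$ and $m$ is large. The shell-wise bound would then carry an uncontrolled extra term of order $Lr\sqrt{\ln N/N}$. The fix (again the paper's route) is to use the same confidence level $t=A\ln(m+1)$ in every shell and pay for it in the union bound: the number of shells is at most $\tfrac12\log_2 N\le \tfrac12(m+1)$ by $N\le 2^{m+1}$, so the total failure probability is still at most $(m+1)^{-A+1}$, and monotonicity in $r$ within each shell costs only a factor of $2$ in the constant. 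With these two corrections (the $O(Lr)$ envelope and the flat union bound over shells), your argument matches the paper's proof; as a side remark, the variance bound you flag as the delicate point is in fact immediate from the pointwise bound $|\ell\circ f-\ell\circ g|\le Lr$.
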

The proof of Lemma~\ref{lemma:gen-bound-1} is provided in Appendix D.  We are now ready to prove Theorem~\ref{thm:2}.
\begin{proof}[of Theorem~\ref{thm:2}]
First, we show that the solution $f$ obtained by Algorithm~\ref{alg:2} has a bounded functional norm $\|f\|$. We have
\begin{eqnarray*}
\|f\| = \left\|\sum_{k=1}^d f_{j_k} \right\| \leq \sum_{k=1}^d \|f_{j_k}\|_{\H_{j_k}}.
\end{eqnarray*}
Following inequality~(\ref{eqn:norm}) in the Proof of Lemma~\ref{lemma:4}, we have
\begin{equation*}
\|f_{j_k}\|_{\H_{j_k}} = \frac{1}{N^2}\a^kK_{j_k}\a^k= \|\nabla_{j_k}\E_N(f^k)\|_{\H_{j_k}}.
\end{equation*}
According to the inequality in~(\ref{eqn:norm2}) in the Proof of Lemma~\ref{lemma:4}, we have
\[
    \|f_{j_k}\|^2_{\H_{j_k}} \leq 2\left(\E_N(f^{k-1}) - \E_N(f^{k})\right),
\]
due to $ \|\nabla_{j_k}\E_N(f^k)\|_{\ld}\leq  \|\nabla_{j_k}\E_N(f^k)\|_{\H_{j_k}}$.
Hence
\[
\|f\| \leq \sum_{k=1}^d \|f_{j_k}\|_{\H_{j_k}} \leq \sqrt{d}\sqrt{\sum_{k=1}^d \|f_{j_k}\|^2_{\H_{j_k}}}\leq \sqrt{2d\E_N(f^0)}\leq \sqrt{\frac{d}{N}}\|\y\|_2\leq \sqrt{d}.
\]
Second, we have

\begin{eqnarray*}
\E(f) & \leq & \E(f^*_R)  +\E_N(f) - \E_N(f^*_R)+\E(f) - \E_N(f)+ \E_N(f^*_R) - \E(f^*_R)\\
&\leq &\E(f^*_R )+  \E_N(f) - \E_N(f_R^*)  + \sup\limits_{f \in \H(\sqrt{d})} |(P - P_N)(\ell\circ f - \ell\circ f^*_R )|\\
&\leq &\E(f^*_R )+ \E_N(f) - \E_N(f^*) + \sup\limits_{f \in \H(\sqrt{d})} |(P - P_N)(f - f^*_R )|.
\end{eqnarray*}

Using the Lemma~\ref{lemma:gen-bound-1}, we have either $\|f - f^*_R\| \leq 8\max(R, \sqrt{d})/\sqrt{N}$, or with a probability at least $1 - (m+1)^{-A + 1}$, that
\begin{eqnarray*}
\sup\limits_{f \in \H(\sqrt{d})} |(P - P_N)(\ell\circ f -\ell\circ f^*_R )| &\leq&\sup\limits_{\|f-g\|\leq R+\sqrt{d}} |(P - P_N)(\ell\circ f -\ell\circ g)|\\
&\leq & 88(\max(R,\sqrt{d}) +1 )(R+\sqrt{d})\sqrt{\frac{A\ln(m+1)}{N}}\\
&\leq & 196(R+\sqrt{d})^2\sqrt{\frac{A\ln(m+1)}{N}},
\end{eqnarray*}
leading to
\[
\E(f) \leq\E(f^*_R )+ \E_N(f) - \E_N(f^*) + 196(R+\sqrt{d})^2\sqrt{\frac{A\ln(m+1)}{N}}.
\]
We complete the proof by plugging the result from Theorem~\ref{cor:1}.
\end{proof}

\section{Conclusion}
\label{sec:conc}
In this paper, we developed an efficient algorithm for sparse multiple kernel learning (MKL) based on greedy coordinate descent 
algorithm. By using an empirical $\ell_2$ norm for measuring the size of functional gradients, we are able to achieve 
a geometric convergence rate under certain conditions. We also prove the generalization error bound of the proposed algorithm.  As the future work, we plan to provide better 
quantization about the independence among kernel matrices, a key condition for our algorithm to achieve geometric  convergence.  

\bibliography{mkl}

\begin{thebibliography}{33}
\providecommand{\natexlab}[1]{#1}
\providecommand{\url}[1]{\texttt{#1}}
\expandafter\ifx\csname urlstyle\endcsname\relax
  \providecommand{\doi}[1]{doi: #1}\else
  \providecommand{\doi}{doi: \begingroup \urlstyle{rm}\Url}\fi

\bibitem[Argyriou et~al.(2005)Argyriou, Micchelli, and
  Pontil]{argyriou:2005:learning}
Andreas Argyriou, Charles~A. Micchelli, and Massimiliano Pontil.
\newblock Learning convex combinations of continuously parameterized basic
  kernels.
\newblock In \emph{Proceedings of the 18th Annual Conference on Learning
  Theory}, pages 338--352, 2005.

\bibitem[Argyriou et~al.(2006)Argyriou, Hauser, Micchelli, and
  Pontil]{Argyriou:2006:DC}
Andreas Argyriou, Raphael Hauser, Charles~A. Micchelli, and Massimiliano
  Pontil.
\newblock A dc-programming algorithm for kernel selection.
\newblock In \emph{Proceedings of the 23rd international conference on Machine
  learning}, pages 41--48, 2006.

\bibitem[Bach(2008)]{Bach:2008:mkl}
Francis Bach.
\newblock Exploring large feature spaces with hierarchical multiple kernel
  learning.
\newblock In \emph{Proceedings of the 22nd Annual Conference on Neural
  Information Processing Systems}, pages 105--112, 2008.

\bibitem[Bach et~al.(2004)Bach, Lanckriet, and Jordan]{Bach:2004:MKL}
Francis~R. Bach, Gert R.~G. Lanckriet, and Michael~I. Jordan.
\newblock Multiple kernel learning, conic duality, and the smo algorithm.
\newblock In \emph{Proceedings of the 21st International Conference on Machine
  learning}, pages 6--13, 2004.

\bibitem[Bartlett et~al.(2002)Bartlett, Bousquet, and
  Mendelson]{Bartlett:2002:localrademacher}
Peter~L. Bartlett, Olivier Bousquet, and Shahar Mendelson.
\newblock Local rademacher complexities.
\newblock \emph{Annals of Statistics}, pages 44--58, 2002.

\bibitem[Bousquet and Herrmann(2003)]{Bousquet:2003:mkl}
Olivier Bousquet and Daniel J.~L. Herrmann.
\newblock On the complexity of learning the kernel matrix.
\newblock In \emph{Proceedings of the 17th Annual Conference on Neural
  Information Processing Systems}, pages 399--406, 2003.

\bibitem[Cortes et~al.(2009)Cortes, Mohri, and
  Rostamizadeh]{Cortes:2009:LRL:1795114.1795128}
Corinna Cortes, Mehryar Mohri, and Afshin Rostamizadeh.
\newblock L2 regularization for learning kernels.
\newblock In \emph{Proceedings of the 25th Conference on Uncertainty in
  Artificial Intelligence}, pages 109--116, 2009.

\bibitem[Cortes et~al.(2010)Cortes, Mohri, and Rostamizadeh]{Cortes:2010:mkl}
Corinna Cortes, Mehryar Mohri, and Afshin Rostamizadeh.
\newblock Generalization bounds for learning kernels.
\newblock In \emph{Proceedings of the 27th Internationl Conference on Machine
  Learning}, 2010.

\bibitem[Hussain and Shawe-Taylor(2011)]{hussain:2011:note}
Zakria Hussain and John Shawe-Taylor.
\newblock A note on improved loss bounds for multiple kernel learning.
\newblock \emph{CoRR}, abs/1106.6258, 2011.

\bibitem[Kloft and Blanchard(2011)]{Marius-2011-gen-bound}
Marius Kloft and Gilles Blanchard.
\newblock The local rademacher complexity of lp-norm multiple kernel learning.
\newblock In \emph{Proceedings of the 25th Annual Conference on Neural
  Information Processing Systems}, 2011.

\bibitem[Kloft et~al.(2009)Kloft, Brefeld, Sonnenburg, Laskov, M\"{u}ller, and
  Zien]{kloft-2009-efficient}
Marius Kloft, Ulf Brefeld, Soeren Sonnenburg, Pavel Laskov, Klaus-Robert
  M\"{u}ller, and Alexander Zien.
\newblock Efficient and accurate lp-norm multiple kernel learning.
\newblock In \emph{Proceedings of the 23rd Annual Conference on Neural
  Information Processing Systems}, pages 997--1005. 2009.

\bibitem[Koltchinskii(2011)]{Koltchinskii:2011:oracle}
Vladimir Koltchinskii.
\newblock \emph{Oracle Inequalities in Empirical Risk Minimization and Sparse
  Recovery Problems}.
\newblock Springer, 2011.

\bibitem[Koltchinskii and Yuan(2008)]{Koltchinskii:2008:sparsity}
Vladimir Koltchinskii and Ming Yuan.
\newblock Sparse recovery in large ensembles of kernel machines on-line
  learning and bandits.
\newblock In \emph{Proceedings of the 21st Annual Conference on Learning
  Theory}, pages 229--238, 2008.

\bibitem[Koltchinskii and Yuan(2010)]{Koltchinskii:2010:mkl}
Vladimir Koltchinskii and Ming Yuan.
\newblock Sparsity in multiple kernel learning.
\newblock \emph{Annuals of Statistics}, 38:\penalty0 3660--3694, 2010.

\bibitem[Lanckriet et~al.(2004)Lanckriet, Cristianini, Bartlett, Ghaoui, and
  Jordan]{Lanckriet:2004:LKM}
Gert R.~G. Lanckriet, Nello Cristianini, Peter Bartlett, Laurent~El Ghaoui, and
  Michael~I. Jordan.
\newblock Learning the kernel matrix with semidefinite programming.
\newblock \emph{Journal of Machine Learning Research}, 5:\penalty0 27--72,
  December 2004.

\bibitem[Lewis et~al.(2006)Lewis, Jebara, and Noble]{Lewis:2006:NKC}
Darrin~P. Lewis, Tony Jebara, and William~Stafford Noble.
\newblock Nonstationary kernel combination.
\newblock In \emph{Proceedings of the 23rd International Conference on Machine
  Learning}, pages 553--560, 2006.

\bibitem[Micchelli and Pontil(2005)]{Micchelli:2005:mkl}
Charles~A. Micchelli and Massimiliano Pontil.
\newblock Learning the kernel function via regularization.
\newblock \emph{Journal of Machine Learning Research}, 6:\penalty0 1099--1125,
  2005.

\bibitem[Nesterov(2010)]{nesterov-2007}
Yurii Nesterov.
\newblock Efficiency of coordinate descent methods on huge-scale optimization
  problems.
\newblock CORE Discussion Paper \#2010-2, 2010.

\bibitem[Ong et~al.(2005)Ong, Smola, and Williamson]{Ong:2005:mkl}
Cheng~Soon Ong, Alexander~J. Smola, and Robert~C. Williamson.
\newblock Learning the kernel with hyperkernels.
\newblock \emph{Journal of Machine Learning Research}, 6:\penalty0 1043--1071,
  December 2005.

\bibitem[Orabona and Jie(2011)]{ICML2011:Orabona:mkl}
Francesco Orabona and Luo Jie.
\newblock Ultra-fast optimization algorithm for sparse multi kernel learning.
\newblock In \emph{Proceedings of the 28th International Conference on Machine
  Learning}, pages 249--256, 2011.

\bibitem[Rakotomamonjy et~al.(2008)Rakotomamonjy, Bach, Canu, and
  Grandvalet]{Rakotomamonjy:2008:mkl}
A.~Rakotomamonjy, F.~R. Bach, S.~Canu, and Y.~Grandvalet.
\newblock Simplemkl.
\newblock \emph{Journal of Machine Learning Research}, 9:\penalty0 2491--2521,
  2008.

\bibitem[Shalev-Shwartz et~al.(2010)Shalev-Shwartz, Srebro, and
  Zhang]{shai-2010-trade}
Shai Shalev-Shwartz, Nathan Srebro, and Tong Zhang.
\newblock Trading accuracy for sparsity in optimization problems with sparsity
  constraints.
\newblock \emph{SIAM Journal on Optimization}, 20\penalty0 (6):\penalty0
  2807--2832, 2010.

\bibitem[Sindhwani and Lozano(2011)]{omp-mkl-nips-2011}
Vikas Sindhwani and Aurelie~C. Lozano.
\newblock Non-parametric group orthogonal matching pursuit for sparse learning
  with multiple kerenels.
\newblock In \emph{Proceedings of the 25th Annual Conference on Neural
  Information Processing Systems}, 2011.

\bibitem[Smale and Zhou(2007)]{smale-2007-learning}
Steve Smale and Ding-Xuan Zhou.
\newblock Learning theory estimates via integral operators and their
  approximations.
\newblock \emph{Constructive Approximation}, 26:\penalty0 153--172, 2007.

\bibitem[Sonnenburg et~al.(2006)Sonnenburg, R\"{a}tsch, Sch\"{a}fer, and
  Sch\"{o}lkopf]{Sonnenburg:2006:LSM}
S\"{o}ren Sonnenburg, Gunnar R\"{a}tsch, Christin Sch\"{a}fer, and Bernhard
  Sch\"{o}lkopf.
\newblock Large scale multiple kernel learning.
\newblock \emph{Journal of Machine Learning Research}, 7:\penalty0 1531--1565,
  2006.

\bibitem[Srebro and Ben-david(2006)]{Srebro:2006:mkl}
Nathan Srebro and Shai Ben-david.
\newblock Learning bounds for support vector machines with learned kernels.
\newblock In \emph{Proceedings of the 19th Annual Conference on Learning
  Theory}, pages 169--183, 2006.

\bibitem[Suzuki(2011)]{Suzuki-2011-gen-bound}
Taiji Suzuki.
\newblock Unifying framework for fast learning rate of non-sparse multiple
  kernel learning.
\newblock In \emph{Proceedings of the 25th Annual Conference on Neural
  Information Processing Systems}, 2011.

\bibitem[Suzuki and Tomioka(2011)]{citeulike:9417296}
Taiji Suzuki and Ryota Tomioka.
\newblock {SpicyMKL}: a fast algorithm for multiple kernel learning with
  thousands of kernels.
\newblock \emph{Machine Learning}, pages 1--32, 2011.

\bibitem[Vishwanathan et~al.(2010)Vishwanathan, sun, Ampornpunt, and
  Varma]{Vishwanathan-2010-mkl}
S.~V.~N. Vishwanathan, Zhaonan sun, Nawanol Ampornpunt, and Manik Varma.
\newblock Multiple kernel learning and the smo algorithm.
\newblock In \emph{Proceedings of the 24th Annual Conference on Neural
  Information Processing Systems}, pages 2361--2369, 2010.

\bibitem[Xu et~al.(2008)Xu, Jin, King, and Lyu]{xu:2008:level}
Z.~Xu, R.~Jin, I.~King, and M.~R. Lyu.
\newblock An extended level method for efficient multiple kernel learning.
\newblock In \emph{Proceedings of the 22nd Annual Conference on Neural
  Information Processing Systems}, pages 1825--1832, 2008.

\bibitem[Ying and Campbell(2009)]{Ying:2009:mkl}
Yiming Ying and Colin Campbell.
\newblock Generalization bounds for learning the kernel.
\newblock In \emph{Proceedings of the 22nd Annual Conference on Learning
  Theory}, 2009.

\bibitem[Ying and Zhou(2007)]{Ying:2007:mkl}
Yiming Ying and Ding-Xuan Zhou.
\newblock Learnability of gaussians with flexible variances.
\newblock \emph{Journal of Machine Learning Research}, 8, December 2007.

\bibitem[Yun et~al.(2011)Yun, Tseng, and Toh]{tseng-2011-bcd}
Sangwoon Yun, Paul Tseng, and Kim-Chuan Toh.
\newblock A block coordinate gradient descent method for regularized convex
  separable optimization and covariance selection.
\newblock \emph{Mathematical Programming}, 129\penalty0 (2):\penalty0 331--355,
  2011.

\end{thebibliography}

\section*{Appendix A. [Proof of Theorem~\ref{lemma:1}]}

First, we bound the difference between $\L(f^k)$ and $\L(f^*)$ and show that for $k \geq 1$, the following holds
\begin{eqnarray}
\label{eqn:lf}
\L(f^{k+1}) - \L(f^*) \leq \frac{2\|f^*\|^2}{k}.
\end{eqnarray}

Similar to the standard theory of greedy algorithm~\citep{shai-2010-trade}, we have
\begin{eqnarray*}
    \L(f^k) - \L(f^*) \leq \sum_{j=1}^m \left\langle f_j^k - f_j^*, \nabla_j \E_N(f^k) + \lambda \delta_j \right\rangle_{\H_j},
\end{eqnarray*}
where $\delta_j \in \partial_j \|f_j^k\|_{\H_j}$. Since $f^k$ is the optimal solution of $\E_N(f) + \lambda\|f\|$  on the support $J(f^k)$, we have $\nabla_j \E_N(f^k) + \lambda \partial_j\|f_j^k\|_{\H_j} = 0, \forall j\in J(f^k)$. By choosing
\[
    \delta_j = -\frac{\nabla_j \E_N(f^k)}{\max(\lambda,
    \|\nabla_j \E_N(f^k)\|_{\H_j})}, j \notin J(f^k),
\]
we have
\begin{eqnarray*}
    \L(f^k) - \L(f^*) & \leq & \sum_{j \notin J(f^k)} \left\langle -f^*_j, \nabla_j \E_N(f^k) + \lambda \delta_j \right\rangle_{\H_j}  \leq  \|f^*\|\left[\max_{j \in [m]} |\nabla_j \E_N(f^k)|_{\H_j} - \lambda\right]_+,
\end{eqnarray*}
where $[z]_+ = \max(0, z)$. The above inequality indicates that if $\max_{j\in[m]}\|\nabla_j\E_N(f^k)\|_{\H_j}\leq \lambda$, $f^k$ is the optimal solution, we thus exist the loop. 

In the following, we assume $\max_{j\in[m]}\|\nabla_j\E_N(f^k)\|_{\H_j}>\lambda$.  We have
\begin{eqnarray*}
\L(f^{k+1})&=& \min\limits_{J(f)= \S_{k+1}}\E_N(f) + \lambda \|f\|\\
 & \leq & \min\limits_{J(f) = \S_{k+1}} \E_N(f^k) + \lambda \|f\| + \sum_{j=1}^m
\left\langle f_j - f_j^k, \nabla_j \E_N(f^k)\right\rangle_{\H_j} + \frac{1}{2N}\sum_{i=1}^N(f(\x_i) - f^k(\x_i))^2,
\end{eqnarray*}
where the inequality follows the definition of $\E_N(f)$.
To bound the R.H.S., we consider the following construction of $f$
\[
    f = f^k - \eta g_{j_{k+1}} = f^k - \eta \frac{\nabla_{j_{k+1}} \E_N(f^k)}{\|\nabla_{j_{k+1}} \E_N(f^k)\|_{\H_j}}.
\]
Using the above solution $f$, we have
\begin{eqnarray*}
\L(f^{k+1}) & \leq& \L(f^k)  + \eta\lambda - \eta \|\nabla_{j_{k+1}} \E_N(f^k)\|_{\H_j}  + \frac{\eta^2}{2 N}\sum_{i=1}^N [g_{j_{k+1}}(\x_i)]^2.
\end{eqnarray*}
Since the above inequality hold for any $\eta\geq 0$ and $j_{k+1} =\arg\max_j \|\nabla_j\E_N(f^k)\|$, we have
\begin{eqnarray*}
\L(f^{k+1}) & \leq & \L(f^k) + \min\limits_{\eta \geq 0} - \eta \left(\max_{j\in[m]}\|\nabla_j \E_N(f^k)\|_{\H_j} - \lambda \right) + \frac{\eta^2}{2 N}\sum_{i=1}^N [g_{j_{k+1}}(\x_i)]^2 \\
& \leq & \L(f^k) + \min\limits_{\eta \geq 0} - \eta \left(\max_{j\in[m]}\|\nabla_j \E_N(f^k)\|_{\H_j} - \lambda \right) + \frac{\eta^2}{2} \\
& \leq & \L(f^k) - \frac{1}{2}\left[\max_{j \in [m]} \|\nabla_j \E_N(f^k)\|_{\H_j} - \lambda \right]_+^2,
\end{eqnarray*}
where the second step follows $\|g_{j_{k+1}}\|_{\H_j} \leq 1$ and therefore $|g_{j_{k+1}}(\x_i)| \leq 1$ since $\kappa_{j}(\x_i, \x_i) \leq 1$.
As a result, when $\max_{j \in [m]} \|\nabla_j \E_N(f^k)\|_{\H_j} - \lambda > 0$, we have
\[
\L(f^k) - \L(f^{k+1}) \geq \frac{\left(\L(f^k) - \L(f^*)\right)^2}{2 \|f^*\|^2}.
\]
Define $\epsilon_k = \L(f^k) - \L(f^*)$. We have
\[
\frac{1}{\epsilon_{k+1}} - \frac{1}{\epsilon_{k}} \geq \frac{\L(f^k) - \L(f^{k+1})}{\epsilon_k^2} \geq
\frac{1}{2 \|f^*\|^2},
\]
leading to the result in (\ref{eqn:lf}).

Next, we consider two cases. In the first case, if $f$ is obtained in the middle of the loop, we have $\max_{j \in [m]} \|\nabla_j \E_N(f)\|_{\H_j} \leq \lambda$, and therefore have $\L(f) = \L(f^*)$. If $f$ is obtained by finishing all the loops, using (\ref{eqn:lf}) , we have the desired rate as 
\[
    \L(f) - \L(f^*) \leq \frac{2 \|f^*\|^2}{d-1}.
\]



\section*{Appendix B. [Proof of Lemma~\ref{lemma:4}]}
\label{app:b}
Similar to the proof of Theorem~\ref{lemma:1}, we have
\begin{eqnarray*}
    \E_N(f^k) - \E_N(\fh) \leq \sum_{j=1}^m \left\langle f_j^k - \fh_j, \nabla_j \E_N(f^k) \right\rangle_{\H_j}.
\end{eqnarray*}
According to the representer theorem, we have
\[
    f_j^k (\x) = \sum_{i=1}^m \alpha_{j,i}^k \kappa_j(\x_i, \x), \quad \fh_j(\x) = \sum_{j=1}^m \hat\alpha_{j,i} \kappa_j(\x_i, \x),
\]
where $\alpha_j^k = (\alpha_{j,1}^k, \ldots, \alpha_{j,n}^k)^{\top} \in \R^n$ and $\hat\alpha_j = (\hat\alpha_{j,1}, \ldots, \hat\alpha_{j,n})^{\top}\in \R^n$ are vector representation of function $f_j^k$ and $\fh_j$. Due to the projection step in updating the kernel classifier (step 5 in Algorithm~\ref{alg:2}), we have $\alpha_j^k \in \span(K_j)$. It is also safe to assume $\hat\alpha_j \in \span(K_j)$ because otherwise we can always project $\hat\alpha_j$ into the subspace $\span(K_j)$ without changing the value $\fh_j(\x_i), i\in[N]$, and therefore without change $\E_N(\fh)$. We define a norm $\|\cdot\|_a$ as
\[
    \|f_j^k\|_a = \sqrt{N}\|\alpha_j^k\|_2, \; \|\fh_j\|_a = \sqrt{N} \|\hat\alpha_j\|_2.
\]
Using these notations, we rewrite $\E_N(f^k) - \E_N(\fh)$ as
\begin{eqnarray*}
    \E_N(f^k) - \E_N(\fh) & \leq & \sum_{j=1}^m \left\langle f_j^k - \fh_j, \nabla_j \E_N(f^k) \right\rangle_{\H_j} \leq \sum_{j=1}^m \|f_j^k - \fh_j\|_a \left\| \nabla_j \E_N(f^k) \right\|_{\ld} \\
    & \leq & \left(\sum_{j=1}^m \|f_j^k - \fh_j\|_a \right)\max\limits_{1 \leq j \leq m} \left\| \nabla_j \E_N(f^k) \right\|_{\ld},
\end{eqnarray*}
where the second inequality follows from Cauchy inequality and the definition of $\ld$ norm of $\left\| \nabla_j \E_N(f^k) \right\|_{\ld}$ that is given by
\[
\left\| \nabla_j \E_N(f^k) \right\|^2_{\ld} = \frac{1}{N} \sum_{a=1}^N \left(\frac{1}{N}\sum_{b=1}^N \ell'(f^k(\x_b), y_b) \kappa_j(\x_a, \x_b) \right)^2 = \frac{1}{N}\|K_j\boldsymbol\ell'(f^k)/N\|_2^2,
\]
where $\boldsymbol\ell'(f^k)= (\ell'(f^k(\x_1), y_1),\cdots, \ell'(f^k(\x_N), y_N))^{\top}$. Using the following equality
\[
    f^{k+1} = f^k - \frac{1}{N}\sum_{i=1}^N a^{k+1}_i \kappa_{j_{k+1}}(\x_i, \cdot),
\]
where $\a^{k+1} = (a^{k+1}_1, \ldots, a^{k+1}_N)^{\top}$ is the projection of vector $\boldsymbol\ell'(f^k)$ into the subspace $\span(K_{j_{k+1}})$, we have
\begin{eqnarray}
\E_N(f^{k+1}) & \leq & \E_N(f^k) + \sum_{j=1}^m
\left\langle f^{k+1}_j - f_j^k, \nabla_j \E_N(f^k)\right\rangle_{\H_j} + \frac{1}{2N}\sum_{i=1}^N(f(\x_i) - f^k(\x_i))^2 \nonumber\\
& = & \E_N(f^k) - \|\nabla_{j_{k+1}} \E_N(f^k)\|_{\H_{j_{k+1}}}^2 + \frac{1}{2}\|\nabla_{j_{k+1}}\E_N(f^k)\|^2_{\ld}\label{eqn:norm2}\\
 &\leq& \E_N(f^k) - \frac{1}{2}\|\nabla_{j_{k+1}}\E_N(f^k)\|^2_{\ld},\nonumber
\end{eqnarray}
where we use  $f^{k+1}_j=f_j^k, \forall j\neq j_{k+1}$,
\begin{eqnarray}\label{eqn:norm}
\left\langle f^{k+1}_j - f_j^k, \nabla_j \E_N(f^k)\right\rangle_{\H_j}&=& - \frac{1}{N^2}{\a^{k+1}}^{\top} K_{j_{k+1}}\boldsymbol\ell'(f^k) =- \frac{1}{N^2}{\boldsymbol\ell'(f^k)}^{\top} K_{j_{k+1}}\boldsymbol\ell'(f^k)\nonumber\\
&=& - \|\nabla_{j_{k+1}} \E_N(f^k)\|_{\H_j}^2.
\end{eqnarray}
\begin{equation}
 \frac{1}{N}\sum_{i=1}^N(f(\x_i) - f^k(\x_i))^2=\frac{1}{N}\|K_{j_{k+1}}\a^k/N\|_2^2=\frac{1}{N}\|K_{j_{k+1}}\boldsymbol\ell'(f^k)/N\|_2^2= \left\| \nabla_{j_{k+1}} \E_N(f^k) \right\|^2_{\ld},\nonumber
\end{equation}
and the fact $\|\nabla_j\E_N(f^k)\|_{\ld} \leq \|\nabla_j\E_N(f^k)\|_{\H_j}$. As a result, we have
\[
\E_N(f^k) - \E_N(f^{k+1}) \geq \frac{\left(\E_N(f^k) - \E_N(\fh)\right)^2}{2 \left(\sum_{j=1}^m \|\fh_j - f_j^k\|_a\right)^2}.
\]
Define $\delta_j = \alpha_j^k - \hat\alpha_j, j\in [m]$. Since $\alpha_j^k \in \span(K_j)$ and $\hat\alpha_j \in \span(K_j)$, we have $\delta_j \in \span(K_j)$. Since we assume $\fh$ is a combination of no more than $d$ kernel classifiers, there are at most $2d$ non-zero vectors in the set $\{\delta_1, \ldots, \delta_m\}$. Using the definition of $\gamma(d,\K)$, we have
\begin{eqnarray*}
\sum_{j=1}^m \|f_j^k - \fh_j\|_a = \sqrt{N}\sum_{j=1}^m \|\delta_j\|_2 \leq \frac{\gamma(2d, \K)}{\sqrt{N}} \left\|\sum_{j=1}^m K_j (\alpha_j^k - \hat\alpha_j) \right\|_2 = \gamma(2d, \K)\|f^k - \fh\|_{\ld}.
\end{eqnarray*}
To simplify our notation, we define $\gamma = \gamma(2d, \K)$. We have
\begin{eqnarray*}
\E_N(f^k) - \E_N(f^{k+1}) & \geq & \frac{(\E_N(f^k) - \E_N(\fh))^2}{2\gamma\|f^k - \fh\|_{\ld}^2}
 \geq  \frac{(\E_N(f^k) - \E_N(\fh))^2}{4\gamma\left(\|f^k - f^*\|_{L_2}^2 + \|\fh - f^*\|_{\ld}^2\right)} \\
& \geq & \frac{(\E_N(f^k) - \E_N(\fh))^2}{8\gamma\left(\E_N(f^k) - \E_N(f^*) + \E_N(\fh) - \E_N(f^*)\right)}.
\end{eqnarray*}
The last step in the above inequality follows the fact that $f^*$ is the minimizer of the empirical loss $\E_N(f)$ and therefore
\[
\E_N(f) - \E_N(f^*)\geq \frac{1}{2N}\sum_{i=1}^N(f(\x_i) - f^*(\x_i))^2 = \frac{1}{2}\|f-f^*\|^2_{\ld}.
\]
Let $k(\mu)$ be the iteration index such that for any $k \leq k(\mu)$ we have $\E_N(f^k) - \E_N(f^*) \geq \mu (\E_N(\fh) - \E_N(f^*)) = \mu \varepsilon_*$, where $\mu \geq 1$. Then, for all $k \leq k(\mu)$, we have
\[
    \E_N(f^k) - \E_N(f^{k+1}) \geq \frac{(\mu - 1)^2}{8\gamma \mu(\mu+1)}\left(\E_N(f^k) - \E_N(f^*) \right).
\]
Define $\epsilon_k = \E_N(f^k) - \E_N(\fh)$ and $\tau = \frac{(\mu - 1)^2}{8\gamma \mu(\mu+1)}$. Then, for any $k \leq k(\mu)$, we have  $\epsilon_{k+1}  \leq \max(0, 1 - \tau)\epsilon_k $ and therefore
\[
\epsilon_{k} \leq [\max(0, 1 - \tau)]^k \epsilon_0 = [\max(0, 1-\tau)]^k\frac{\|\y\|^2_2}{2N} \le \frac{1}{2}\left[\max(0, 1 - \tau)\right]^k,
\]
leading to the result in the lemma.


\section*{Appendix C. [Proof of Proposition~\ref{prop:3}]}
We only need to prove $\widehat\gamma =\frac{\sqrt{d} }{\sqrt{1 - (d-1)\delta(\K)}\sigma^+_{\min}}$ satisfies the following inequality
\[
\sum_{j \in J} \|\a_j\|_2 \leq \widehat\gamma \left\|\sum_{j\in J} \Kh_j\a_j \right\|_2.
\]
To prove this, we let $\mathbf z= (\|\a_j\|_2,j\in J)$, 
 and proceed as follows:
\begin{eqnarray*}
&\widehat\gamma^2& \left\|\sum_{j\in J} \Kh_j\a_j \right\|^2_2\geq \widehat\gamma^2\left(\sum_{j\in J}\|\Kh_j\a_j\|_2^2 + \sum_{i\neq j, i, j\in J}\langle \Kh_i\a_i, \Kh_j\a_j\rangle\right)\\
&\geq& \widehat\gamma^2(\sigma^+_{\min})^2 \left(\sum_{j\in J}\|\a_j\|_2^2 - \delta(\K)\sum_{i\neq j, i, j\in J}\|\a_i\|_2\|\a_j\|_2\right)\\
&\geq&\widehat\gamma^2(\sigma^+_{\min})^2 \left((1-\delta(\K))\sum_{j\in J}\|\a_j\|_2^2 + \delta(\K)(2\|\mathbf z\|_2^2 - (\mathbf z^{\top}\mathbf 1)^2)\right)\\
&\geq & \widehat\gamma^2(\sigma^+_{\min})^2\left(\sum_{j\in J}\|\a_j\|^2_2\right)(1-(d-1)\delta(\K)) \geq  \widehat\gamma^2\frac{(\sigma^+_{\min})^2}{d}\left(\sum_{j\in J}\|\a_j\|_2\right)^2(1-(d-1)\delta(\K)).
\end{eqnarray*}
Plugging the values of $\widehat\gamma$, we prove the required inequality.

 \section*{Appendix D. [Proof of Lemma~\ref{lemma:gen-bound-1}]}
We first bound the concentration of regression error for fixed $r$. Using the Telagrand inequality~\citep{Koltchinskii:2011:oracle}, we have with a probability $1 - e^{-t}$
\begin{eqnarray*}
& &\hspace*{-0.3in}  \sup\limits_{\|f - g\| \leq r} |(P - P_N)(\ell \circ f - \ell \circ g)| \\
& \leq & 2\left(\mathrm{E}\left[\sup\limits_{\|f - g\| \leq r } |(P - P_N)(\ell \circ f - \ell \circ g)| \right] + \sqrt{P(\ell \circ f - \ell \circ g)^2}\sqrt{\frac{t}{N}} + |\ell\circ f - \ell \circ g|_{\infty} \frac{t}{N}\right) \\
& \leq & 2\left(\mathrm{E}\left[\sup\limits_{\|f - g\| \leq r} |(P - P_N)(\ell \circ f - \ell \circ g)| \right] +  L r\sqrt{\frac{t}{N}} + \frac{L r t}{N}\right).
\end{eqnarray*}
We now bound the expectation $\mathrm{E}\left[\sup\limits_{\|f - g\| \leq r } |(P - P_N)(\ell \circ f - \ell \circ g)| \right]$. We have
\begin{eqnarray*}
& &\hspace*{-0.3in} \mathrm{E}\left[\sup\limits_{\|f - g\| \leq r} |(P - P_N)(\ell \circ f - \ell \circ g)| \right] \\
& \leq & 2\mathrm{E}_{N,\sigma}\left[\sup\limits_{\|f - g\| \leq r} R_n(\ell \circ f - \ell \circ g)\right] \leq 4 L \mathrm{E}_{N,\sigma}\left[\sup\limits_{\|f - g\| \leq r } R_N(f - g) \right],
\end{eqnarray*}
where $R_N(f) = \frac{1}{N}\sum_{i=1}^N \sigma_i f(\x_i)$ is the Rademacher complexity measure and $\sigma_i, i=1, \ldots, N$ are Rademacher variables. The last inequality follows the contraction property of Rademacher complexity measure~\citep{Koltchinskii:2011:oracle}. To continue bounding the quantity, we first notice that
\[
\sup\limits_{\|f - g\| \leq r } R_N(f - g) \leq r\max\limits_{1 \leq j \leq m} \left[\sup\limits_{\|f_j - g_j\|_{\H_j} \leq 1} R_N(f_j - g_j) \right].
\]
This is because
\begin{eqnarray*}
\sup\limits_{\|f - g\| \leq r } R_N(f - g) 
&=&  r\sup\limits_{\sum_{j=1}^m \|f_j - g_j\|_{\H_j} \leq 1} \frac{1}{N}\sum_{i=1}^N \sum_{j=1}^m \sigma_i(f_j(\x'_i) - g_j(\x'_i)) \\
& = & r\sup\limits_{\sum_{j=1}^m \|f_j - g_j\|_{\H_j} \leq 1} \sum_{j=1}^m\frac{\|f_j - g_j\|_{\H_j}}{N}\sum_{i=1}^N  \sigma_i\frac{f_j(\x'_i) - g_j(\x'_i)}{\|f_j - g_j\|_{\H_j}} \\
& \leq&  r\max\limits_{1 \leq j \leq m} \sup_{\|f_j - g_j\|_{\H_j} \leq 1 } R_N(f_j - g_j).
\end{eqnarray*}
Using Theorem 5 from~\citep{hussain:2011:note}, we have, with a probability $1 - e^{-t}$, that
\begin{eqnarray*}
\mathrm{E}_{\sigma}\left[ \max\limits_{1 \leq j \leq m} \sup\limits_{\|f_j - g_j\|_{\H_j} \leq 1} R_N(f_j - g_j)\right] & \leq & \max\limits_{1 \leq j \leq m} \mathrm{E}_{\sigma}\left[\sup\limits_{\|f_j - g_j\|_{\H_j} \leq 1} R_N(f_j - g_j)\right] + 4\sqrt{\frac{\ln(m+1) + t}{2N}} \\
& \leq & \frac{1}{\sqrt{N}} + 4\sqrt{\frac{\ln(m+1) + t}{2N}},
\end{eqnarray*}
where the last step uses the fact $\kappa_j(\x, \x) \leq 1$ and the result from~\citep{Bartlett:2002:localrademacher}. Combining the above results and setting $t = A\ln (m+1)$, we have with a probability at least $1 - 2(m+1)^A$, for a fixed $r$,
\begin{eqnarray}
\lefteqn{\sup\limits_{\|f - g\| \leq r} |(P - P_N)(\ell \circ f - \ell \circ g)|} \nonumber \\
& \leq & 2Lr\left( \frac{4}{\sqrt{N}} + 16\sqrt{\frac{(A+1)\ln(m+1)}{2N}}+\sqrt{\frac{A\ln (m+1)}{N}} + \frac{A\ln (m+1)}{N}\right) \nonumber \\
& \leq & Lr\left(42\sqrt{\frac{A\ln(m+1)}{N}} + 2\frac{A\ln(m+1)}{N} \right). \label{eqn:a1}
\end{eqnarray}
Now, we show the bound holds uniformly for all $r \in (r_0, 2R)$. Note that $r$ cannot be larger than $2R$ because
$\sum_{i=1}^m \|f_i - g_i\|_{\H_i}  \leq 2R$. 
To this end, we consider $R_j = 2^{1-j} R, j =0, \ldots, j_0$, where $j_0 \leq \lceil \log_2[2R] - \log_2 r_0 \rceil \leq 0.5\log_2 N -1 $. Then, with probability $1 - [\log_2 N] (m+1)^{-A}$, we have (\ref{eqn:a1}) hold for all $\{R_j\}_{j=0}^{j_0}$. Using the monotonicity with respect to $r$, for any $r \geq r_0$, we have

\begin{eqnarray*}
\sup\limits_{\|f - g\| \leq r} |(P - P_N)(\ell \circ f - \ell \circ g)|  \leq  Lr\left(84\sqrt{\frac{A\ln(m+1)}{N}} + 4\frac{A\ln(m+1)}{N} \right)  \leq  88Lr\sqrt{\frac{A\ln(m+1)}{N}}.
\end{eqnarray*}

We complete the proof by using the relation $\log_2 N < m + 1$ and $N \geq A\ln(m+1)$.



\end{document}